\newcommand{\x}{\textbf{x}}
\newcommand{\real}{\mathbb{R}}
\newcommand{\w}{\textbf{w}}
\renewcommand{\S}{\mathcal{S}}
\newcommand{\reals}{\mathbb{R}}
\newtheorem{lemma}{Lemma}
\newtheorem{definition}{Definition}
\newtheorem{corollary}{Corollary}
\title{Algorithmic guarantees for Inverse Imaging \\ with Untrained Network Priors}
\author{Gauri~Jagatap and Chinmay~Hegde\thanks{The authors are with the Department of Electrical and Computer Engineering, New York University, NY, USA. Email: {gauri.jagatap,chinmay.h}@nyu.edu. This work was supported in part by NSF grants CCF-1566281, CAREER CCF-1750920, CCF-1815101, GPU grants from NVIDIA Corporation, and a faculty fellowship from the Black and Veatch Foundation. This work appears in the NeurIPS conference proceedings as: G. Jagatap and C. Hegde, ``Algorithmic guarantees for inverse imaging with untrained network priors.", Thirty-third Conference on Neural Information Processing Systems, 2019.	
} 	
}
\date{}
\begin{document}
	\maketitle

\begin{abstract}
Deep neural networks as image priors have been recently introduced for problems such as denoising, super-resolution and inpainting with promising performance gains over hand-crafted image priors such as sparsity. Unlike \textit{learned} generative priors they do not require any training over large datasets.  However, few theoretical guarantees exist in the scope of using untrained network priors for inverse imaging problems.  
We explore new applications and theory for using untrained neural network priors.  
Specifically, we consider the problem of solving linear inverse problems, such as compressive sensing, as well as non-linear problems, such as compressive phase retrieval. We model images to lie in the range of an untrained deep generative network with a fixed seed. We further present a projected gradient descent scheme that can be used for both compressive sensing and phase retrieval and provide rigorous theoretical guarantees for its convergence. We also show both theoretically as well as empirically that with deep neural network priors, one can achieve better compression rates for the same image quality as compared to when hand crafted priors are used. 


\end{abstract}

\section{Introduction}

\subsection{Motivation}

Deep neural networks have led to unprecedented success in solving several problems, specifically in the domain of inverse imaging. Image denoising \cite{vincent2010stacked}, super-resolution \cite{srcnn}, inpainting, compressed sensing \cite{onenet}, and phase retrieval \cite{prdeep} are among the many imaging applications that have benefited from the use of deep convolutional networks (CNNs) trained with thousands of images.

Apart from supervised learning, deep CNN models have also been used in unsupervised setups, for example Generative Adversarial Networks (GANs). Here, image priors based on a generative model \cite{gan} are learned from training data. In this context, neural networks emulate the probability distribution of the data inputs. GANs have been used to model signal prior by learning the distribution of training data. Such learned priors have replaced hand-crafted priors with high success rates \cite{onenet,CSGAN,PRGAN,deepcs}. 

However, the main challenge with these approaches is the requirement of massive amounts of training data. For instance, super-resolution CNN \cite{srcnn} uses ImageNet which contains millions of images. Moreover, convergence guarantees for training such networks are limited \cite{PRGAN}. 

In contrast, there has been recent interest in using \emph{untrained} neural networks as an image prior. Deep Image Prior \cite{DIP} and variants such as Deep Decoder \cite{DD} are capable of solving linear inverse imaging problems with no training data whatsover, while merely imposing an auto-encoder \cite{DIP} and decoder \cite{DD} architecture as a structural prior. For denoising, inpainting and super-resolution, deep image priors have shown superior reconstruction performance as compared to conventional methodologies such as basis pursuit denoising (BPDN) \cite{bpdn}, BM3D \cite{bm3d} as well as convolutional sparse coding \cite{CSC}. Similar emperical results have been claimed very recently in the context of time-series data for audio applications \cite{timeseriesdnp,audiodnp}. The theme in all of these approaches is the same: to design a prior that exploits \textit{local} image correlation, instead of global statistics, and find a good low-dimensional \emph{neural} representation of natural images. However, most of these works have very limited \cite{MLCSC,DD} or no theoretical guarantees.

Neural networks priors for compressive imaging has only recently been explored. In the context of compressive sensing (CS), \cite{csdip} uses Deep Image Prior along with \textit{learned regularization} for reconstructing images from compressive measurements \cite{cs}. However, the model described still relies on training data for learning appropriate regularization parameters. For the problem of compressive sensing, priors such as sparsity \cite{cosamp} and structured sparsity \cite{modelcs} have been traditionally used. 

Phase retrieval is another inverse imaging problem in several Fourier imaging applications, which involves reconstructing images from magnitude-only measurements. Compressive phase retrieval (CPR) models use sparse priors for reducing sample requirements; however, standard techniques from recent literature \cite{copram} suggest a quadratic dependence of number of measurements on the sparsity level for recovering sparse images from magnitude-only Gaussian measurements and the design of a smart initialization scheme \cite{sparta,copram}. If a prior is learned via a GAN \cite{PRGAN}, \cite{ptychogan}, then this requirement can be brought down; however one requires sufficient training data, which can be prohibitively expensive to obtain in domains such as medical or astronomical imaging.

\subsection{Our contributions}
In this paper, we explore, in depth, the use of untrained deep neural networks as an image prior for inverting images from under-sampled linear and non-linear measurements. Specifically, we assume that the image, ${x^*}^{d\times 1}$ has $d$ pixels. We further assume that the image $x^*$ belongs to the range spanned by the weights of a deep \textit{under-parameterized} untrained  neural network $G(\w;z)$, which we denote by $\S$, where $\w$ is a set of the weights of the deep network and $z$ is the latent code. 
The compressive measurements are stored in vector $y =  f(x^*)$, where $f$ embeds either compressive linear (defined by operator $A(\cdot)$) or compressive magnitude-only (defined by operator $|A(\cdot)|$) measurements. The task is to reconstruct image $\hat{x}$ which corresponds to small measurement error $\min_{x\in\S} \|f(x) - y\|_2^2$. 
With this setup, we establish theoretical guarantees for successful image reconstruction from both measurement schemes under untrained network priors. 

Our specific contributions are as follows:
\begin{itemize}[leftmargin=*,topsep=0pt, partopsep=0pt]
	\item We first present a new variant of the Restricted Eigenvalue Condition (REC) \cite{cs} via a covering number argument for the range of images $\S$ spanned by a deep untrained neural network. We use this result to guarantee unique image reconstruction for two different compressive imaging schemes.
	
	
	\item We propose a projected gradient descent (PGD) algorithm for solving the problem of compressive sensing with a deep untrained network prior. To our knowledge this is the first paper to use deep neural network priors for compressive sensing \footnote{We note recent concurrent work in \cite{csdd} which explores a similar approach for compressive sensing; however our paper focuses theoretical guarantees rooted in an algorithmic procedure.}, which relies on no training data\footnote{\cite{csdip} requires training data for learning a regularization function.}. We analyze the conditions under which PGD provably converges and report the sample complexity requirements corresponding to it. We also show superior performance of this framework via empirical results.
	
	\item We are the first to use deep network priors in the context of phase retrieval. We introduce a novel formulation, to solve compressive phase retrieval with fewer measurements as compared to state-of-art.  We further provide preliminary guarantees for the convergence of a projected gradient descent scheme to solve the problem of compressive phase retrieval. We empirically show significant improvements in image reconstruction quality as compared to prior works.  
\end{itemize}
We note that our sample complexity results rely on the number of parameters of the assumed deep network prior. Therefore, to get meaningful bounds, our network priors are \textit{under-parameterized}, in that the total number of unknown parameters of the deep network is smaller than the dimension of the image. To ensure this, we build upon the formulation of the deep decoder \cite{DD}, which is a special network architecture resembling the decoder of an autoencoder (or generator of a GAN). The requirement of under-parameterization of deep network priors is natural; the goal is to design priors that \textit{concisely} represent natural images. Moreover, this also ensures that the network does not fit noise \cite{DD}. Due to these merits, we use select the deep decoder architecture for all analyses in this paper.

\subsection{Prior work}

Sparsifying transforms have long been used to constrain the solutions of inverse imaging problems in the context of denoising or inpainting. Conventional approaches to solve these problems include Basis Pursuit Denoising (BPDN) or Lasso \cite{bpdn}, TVAL3 \cite{tval3}, which rely on using $\ell_0$, $\ell_1$ and total variation (TV) regularizations on the image to be recovered. Sparsity based priors are highly effective and dataset independent, however it heavily relies on choosing a good sparsifying basis \cite{wavelet}. 

Instead of hand-picking the sparsifying transform, in dictionary learning one learns both the sparsifying transform and the sparse code \cite{DL}. The dictionary captures global statistics of a given dataset \footnote{Local structural information from a single image can also be used to learn dictionaries, by constructing several overlapping crops or patches of a single image.}. Multi-layer convolutional sparse coding \cite{MLCSC} is an extension of sparse coding which models a given dataset in the form of a product of several linear dictionaries, all of which are convolutional in nature and this problem is challenging. 

Generative adversarial networks (GAN) \cite{gan} have been used to generate photo-realistic images in an unsupervised fashion. The generator consists of stacked convolutions and maps random low-dimensional noise vectors to full sized images. 
GAN priors have been successfully used for inverse imaging problems \cite{CSGAN,PRGAN,hyder2019alternating,shah2018solving,deepcs}. The shortcomings of this approach are two-fold: test images are strictly restricted to the range of a trained generator, and the requirement of sufficient training data.

Sparse signal recovery from linear compressive measurements \cite{cs} as well as magnitude-only compressive measurements \cite{copram} has been extensively studied, with several algorithmic approaches \cite{cosamp,copram}. 
In all of these approaches, modeling the low-dimensional embedding is challenging and may not be captured correctly using simple hand-crafted priors such as structured sparsity \cite{modelcs}. Since it is hard to estimate these hyper-parameters accurately, the number of samples required to reconstruct the image is often much higher than information theoretic limits \cite{jagatap2019sample,CSGAN}. 

The problem of compressive phase retrieval specifically, is even more challenging because it is non-convex. Several papers in recent literature \cite{twf,thwf,copram} rely on the design of a spectral initialization scheme which ensures that one can subsequently optimize over a convex ball of the problem. However this initialization requirement results in high sample requirements and is a bottleneck in achieving information theoretically optimal sample complexity. 

Deep image prior \cite{DIP} (DIP) uses primarily an encoder-decoder as a \textit{prior} on the image, alongside an early stopping condition, for inverse imaging problems such as denoising, super-resolution and inpainting.  Deep decoder \cite{DD} (DD) improves upon DIP, providing a much simpler, \textit{underparameterized} architecture, to learn a low-dimensional manifold (latent code) and a decoding operation from this latent code to the full image. 
Because it is under parameterized, deep decoder does not fit noise, and therefore does not require early stopping.

Deep network priors in the context of compressive imaging have only recently been explored \cite{csdip}, and only in the context of compressive sensing. In contrast with \cite{csdip} which extends the idea of a Deep Image Prior to incorporate learned regularizations, in this paper we focus more on theoretical aspects of the problem and also explore applications in compressive phase retrieval. To our knowledge the application of deep network priors to compressive phase retrieval is novel.

\section{Notation} \label{sec:notation}

Throughout the paper, lower case letters denote vectors, such as $v$ and upper case letters for matrices, such as $M$. A set of variables subscripted with different indices is represented with bold-faced shorthand of the following form: $\w := \{W_1,W_2,\dots W_L\}$. The neural network consists of $L$ layers, each layer denoted as $W_l$, with $l\in\{1,\dots L\}$ and are $1\times 1$ convolutional. 
Up-sampling operators are denoted by $U_l$. Vectorization of a matrix is written as vec$(\cdot)$. The activation function considered is Rectified Linear Unit (ReLU), denoted as $\sigma(\cdot)$. Hadamard or element-wise product is denoted by $\circ$. Element-wise absolute valued vector is denoted by $|v|$. Unless mentioned otherwise, $\|v\|$ denotes vector $\ell_2$-norm and $\|M\|$ denotes spectral norm $\|M\|_2$.

\section{Problem setup}

\subsection{Deep neural network priors} \label{subsec:priors}
In this paper we discuss the problem of inverting a mapping $x\to y$ of the form:
\begin{align*}
y = f(x) 
\end{align*}
where $x = \text{vec}(X)^{dk}$ is a $d$-dimensional signal $X^{d\times k}$ (vectorized image), with $k$  channels and $f: x\to y \in \real^n$ captures a compressive measurement procedure, such as a linear operator $A(\cdot)$ or magnitude only measurements $|A(\cdot)|$ and $n < dk$. We elaborate further on the exact structure of $f$ in the next subsection (Section \ref{subsec:models}). The task of reconstructing image $x$ from measurements $y$ can be formulated as an optimization problem of the form:
\begin{align} \label{eq:min}
\min_{x\in \S} \|y - f(x)\|^2_2 
\end{align}
where we have chosen the $\ell_2$-squared loss function and where $\S$ captures the prior on the image. 

If the image $x$ can be represented as the action of a deep generative network $G(\w;z)$ with weights $\w$ on some latent code $z$, such that $x=G(\w;z)$, then the set $\S$ captures the characteristics of $G(\w;z)$. 
The latent code $z := \text{vec}(Z_1)\:\text{with} \: Z_1 \in \real^{d_1\times k_1}$ is a low-dimensional embedding with dimension $d_1 k_1 \ll d k$ and its elements are generated from uniform random distribution.

When the network $G(\cdot)$ and its weights $\w:=\{W_1,\dots W_L\}$ are \textit{known} (from pre-training a generative network over large datasets) and fixed, 
%
the task is to obtain an estimate $\hat{x} = G(\w;\hat{z})$, which indirectly translates to finding the optimal latent space encoding $\hat{z}$ .
This problem has been studied in \cite{CSGAN,PRGAN} in the form of using learned GAN priors for inverse imaging.

In this paper however, the weights of the generator $\w$ are \textit{not pre-trained}; rather, the task is to estimate image $\hat{x} = G(\hat{\w};z) \approx G(\w^*;z) = x^*$ and corresponding weights $\hat{\w}$, for a \textit{fixed} seed $z$, where $x^*$ is assumed to be the true image and the true weights $\w^*$ (possibly non-unique) satisfy $\w^* = \min_{\w}\|x^*-G(\w;z)\|_2^2$. Note that the optimization in Eq. \ref{eq:min} is equivalent to substituting the surjective mapping $G:\w \to x$, and optimizing over $\w$, 
\begin{align} \label{eq:min2}
\min_{\w}\|y - f(G(\w;z))\|^2_2 ,
\end{align}
and estimate weights $\hat{\w}$ and corresponding image $\hat{x}$.
 
Specifically, the untrained network $G(\w;z)$ takes the form of an expansive neural network; a decoder architecture similar to the one in \cite{DD} \footnote{Alternatively, one may assume the architecture of the generator of a DCGAN \cite{DCGAN,csdip}.}. The neural network is composed of $L$ weight layers $W_l$, indexed by $l \in \{1,\dots, L\}$ and are $1\times 1$ convolutions, upsampling operators $U_l$ for $l\in \{1,\dots L-1\}$ and ReLU activation $\sigma(\cdot)$ and is expressed as follows 
\begin{align} \label{eq:decoder}
x = G(\w;z) = U_{L-1}\sigma(Z_{L-1} W_{L-1}) W_L = Z_L W_{L},
\end{align}
where $\sigma(\cdot)$ represents the action of ReLU operation, $Z_i^{d_{i}\times k_{i}} = U_{i-1}\sigma(Z_{i-1}W_{i-1})$, for $i=2,\dots L$, $z = \text{vec}(Z_1)$, 
$d_L=d$ and $W_L \in \reals^{k_L\times k}$. 

To capture the range of images spanned by the deep neural network architecture described above, we formally introduce the main assumption in our paper through Definition \ref{def:set}. Without loss in generality, we set $k=1$ for the rest of this paper, while noting that the techniques carry over to general $k$.

\begin{definition} \label{def:set}
	A given image $x \in \reals^d$ is said to obey an untrained neural network prior if it belongs to a set $\S$ defined as:  
	$$\S := \{x| x = G(\w;z)\}$$
	where $z$ is a (randomly chosen, fixed) latent code vector and $G(\w;z)$ has the form in Eq.\ \ref{eq:decoder}.
\end{definition}

\subsection{Observation models and assumptions} \label{subsec:models}

We now discuss the compressive measurement setup in more detail. Compressive measurement schemes were developed in \cite{cs} for efficient imaging and storage of images and work only as long as certain structural assumptions on the signal (or image) are met. The optimization problem in Eq.\ref{eq:min} is
 non-convex in general, partly dictated by the non-convexity of set $\S$. Moreover, in the case of phase retrieval, the loss function is itself non-convex. Therefore unique signal recovery for either problems is not guaranteed without making specific assumptions on the measurement setup. 

In this paper, we assume that the measurement operation can be represented by the action of a Gaussian matrix $A$ which is rank-deficient ($n<d$). The entries of this matrix are such that $A_{ij} \sim \mathcal{N}(0,1/n)$. Linear compressive measurements take the form $y=Ax$ and magnitude-only measurements take the form $y=|Ax|$. We formally discuss the two different imaging schemes in the next two sections. We also present algorithms and theoretical guarantees for their convergence. For both algorithms, we require that a special $(S,\gamma,\beta)$-REC (Restricted Eigenvalue Condition) holds for measurement matrix $A$, which is defined below.

\begin{definition} \label{def:RIP}
	$(\S,\gamma, \beta )$-REC: Set-Restricted Eigenvalue Condition with parameters $\gamma,\beta$: 
	
	For parameters $\gamma, \beta >0$, a matrix $A \in \mathbb{R}^{n\times d}$ satisfies $(\S,\gamma,\beta)$-REC, if for all $x \in \S$, 
	$$ \gamma\|x\|^2 \leq \|Ax\|^2 \leq \beta\|x\|^2 .$$
	We refer to the left (lower) inequality as $(\S,\gamma)$-REC and right (upper) inequality as $(\S,\beta)$-REC.
\end{definition}

The $(\S,1-\alpha,1+\alpha)$ REC is achieved by Gaussian matrix $A$ under certain assumptions, which we state and prove via Lemma \ref{lem:leftRIP} as follows. 

\begin{restatable}{lemma}{lemRIP} \label{lem:leftRIP}
	If an image $x\in \mathbb{R}^{d}$ has a decoder prior (captured in set $\S$), where the decoder consists of weights $\w$, has two layers, and piece-wise linear activation (ReLU), a random Gaussian matrix $A \in \mathbb{R}^{n\times d}$ with elements from $\mathcal{N}(0,1/n)$, satisfies  $(\S,1-\alpha,1+\alpha)$-REC, with probability $1-e^{-c\alpha^2 n}$, as long as $n = O\left(\frac{1}{\alpha^2} k_1 k_2 \log d\right)$, for small constant $c$ and $0<\alpha<1$.
\end{restatable}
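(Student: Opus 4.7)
The overall strategy is the standard covering-and-concentration argument, adapted to the fact that $\S$ is neither a subspace nor a smooth low-dimensional manifold but a union of bilinear pieces carved out by ReLU sign patterns. I would (i) bound the $\epsilon$-covering number of $\S \cap \mathbb{S}^{d-1}$, (ii) apply Gaussian concentration of $\|Ax\|^2$ pointwise on the net, (iii) union-bound, and (iv) extend from the net to all of $\S$ by a continuity/chaining argument. This parallels the route used for covering arguments with GAN priors, but the tricky bookkeeping is in step (i).

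For step (i), recall that with $L=2$ and $k=1$ the prior is $x = G(\w;z) = U_1\,\sigma(Z_1 W_1)\,W_2$ with $W_1 \in \mathbb{R}^{k_1 \times k_2}$ and $W_2 \in \mathbb{R}^{k_2}$, since $z$, $Z_1$ and $U_1$ are fixed. The ReLU activation in the hidden layer is determined by the signs of the $d_1 k_2$ linear forms $(Z_1 W_1)_{ij}$, which are hyperplanes through the origin in the $k_1 k_2$-dimensional space of $W_1$. By the standard hyperplane arrangement bound, the number of distinct sign patterns is at most $(C d_1 k_2)^{k_1 k_2} \le d^{O(k_1 k_2)}$. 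On each such polytope the map $\w \mapsto G(\w;z)$ becomes bilinear in $(W_1,W_2)$, and using the positive homogeneity $G(tW_1, W_2/t;z) = G(W_1,W_2;z)$ I can normalize one factor onto its unit ball. A Lipschitz $\epsilon$-net in the remaining $O(k_1 k_2)$ parameters, pushed forward through the bilinear map (whose operator norm is polynomially bounded in $d$ via $\|Z_1\|$ and $\|U_1\|$), gives a net of size $(C d/\epsilon)^{O(k_1 k_2)}$ inside each cell. Multiplying by the sign-pattern count yields $\log N(\S \cap \mathbb{S}^{d-1},\epsilon) \le C\,k_1 k_2 \log(d/\epsilon)$.

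For steps (ii)--(iii), for each fixed unit $x_0$ the random variable $n\|Ax_0\|^2$ is $\chi^2_n$, so $\Pr\!\left[\,\bigl|\|Ax_0\|^2 - 1\bigr| > \alpha/2\,\right] \le 2e^{-c\alpha^2 n}$. Union-bounding over the $\epsilon$-net and setting $n = C\,k_1 k_2 \log(d)/\alpha^2$ (with $\epsilon$ a small universal constant) makes the failure probability $e^{-c\alpha^2 n}$. To extend from the net to an arbitrary unit $x \in \S$, I pick $x_0$ in the net with $\|x - x_0\| \le \epsilon$ and iterate this approximation at scales $\epsilon, \epsilon/2, \epsilon/4,\ldots$, expressing $x$ as a geometric telescoping sum of net points plus a vanishing residual. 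Summing the geometric series controls $\bigl|\|Ax\|^2 - \|Ax_0\|^2\bigr|$ by $O(\alpha)\|x\|^2$, giving $(1-\alpha)\|x\|^2 \le \|Ax\|^2 \le (1+\alpha)\|x\|^2$ uniformly over $\S$.

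The step I expect to be the main obstacle is the covering argument in part (i): unlike the GAN-prior setting where a trained generator is globally Lipschitz with a bounded, pretrained weights, here the generator's weights are themselves the free variables, so I must simultaneously control (a) the combinatorial explosion of ReLU activation regions, (b) the scale ambiguity $(tW_1,W_2/t)$, and (c) the effective Lipschitz constant of the bilinear parameter-to-image map on the chosen normalized domain so that the $\log(1/\epsilon)$ factors collapse into the $\log d$ in the final sample bound. Once this is handled, the concentration and extension steps are routine.
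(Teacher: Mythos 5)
Your proposal is correct in outline and shares the paper's central combinatorial idea --- enumerate the ReLU sign patterns, note that the generator linearizes on each pattern, and union-bound over the $d^{O(k_1k_2)}$ patterns --- but it handles each pattern differently, and less efficiently, than the paper does. The paper observes that for a fixed sign pattern $P_1$ the output $U_1(P_1\circ(Z_1W_1))W_2 = U_1[P_{1,1}Z_1\cdots P_{1,k_2}Z_1]\,w$ is \emph{linear} in the lumped products $w = [w_{2,1,1}w_{1,:,1}^\top \dots w_{2,k_2,1}w_{1,:,k_2}^\top]^\top$, so the image of each cell is contained in a fixed linear subspace of dimension at most $k_1k_2$; it then invokes an oblivious subspace embedding (Lemma \ref{lem:ose}), which gives the two-sided bound uniformly over that entire subspace with probability $1-e^{-c\alpha_1^2 n}$ for $n = O(k_1k_2/\alpha^2)$, and a single union bound over the $d_1^{k_1k_2}$ cells finishes the proof. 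This makes your step (i)'s within-cell $\epsilon$-net, the scale normalization $(tW_1,W_2/t)$, the Lipschitz/operator-norm control of the bilinear parameter-to-image map, and the multiscale chaining of step (iv) all unnecessary --- precisely the bookkeeping you flag as the main obstacle evaporates once you pass from the bilinear parametrization to the linear span of its image. Your route does still work (it is essentially the Baraniuk-style net argument for a union of subspaces, provided you chain \emph{within} each cell's subspace so that the residuals $x-x_0$ remain coverable at finer scales, and provided you note that the per-cell set sits inside a $k_1k_2$-dimensional subspace so the net size is $(C/\epsilon)^{O(k_1k_2)}$ independent of any Lipschitz constant), and it buys a self-contained argument from scalar $\chi^2_n$ concentration rather than a black-box OSE; the paper's version buys brevity and avoids the chaining entirely.
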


\textit{\textbf{Proof sketch:}} We use a union of sub-spaces model, similar to that developed in \cite{CSGAN} which was developed for GAN priors, to capture the range of a deep untrained network. 

Our method uses a \emph{linearization principle}. If the output sign of any ReLU activation $\sigma(\cdot)$ on its inputs were known \emph{a priori}, then the mapping $x=G(\w;z)$ becomes a product of linear weight matrices and linear upsampling operators acting on the latent code $z$. 
The bulk of the proof relies on constructing a counting argument for the number of such {linearized networks}; call that number $N$. For a fixed linear subspace, the image $x$ has a representation of the form $x = U Z' w $, where $Z'$ absorbs the effect of all ReLU operations. The latent code $Z$ is fixed and known and $w$ represents a stack of the trainable weights with $w$ which is $k_1 k_2$-dimensional. An oblivious subspace embedding (OSE) of $x$ takes the form
$$
(1-\alpha)\|x\|^2 \leq \|Ax\|^2 \leq (1+\alpha)\|x\|^2,
$$
where $A$ is a Gaussian matrix, and holds for all $ k_1 k_2$-dimensional vectors $w$, with high probability as long as $n=O( k_1k_2/\alpha^2)$. We further require to take a union bound over all possible such {linearized} networks, which is upper bounded as $N \leq d^{k_1k_2}$. The sample complexity corresponding to this bound is then computed to complete the set-REC result. The complete proof can be found in Appendix \ref{sec:appendix} and a discussion on the sample complexity is presented in Appendix \ref{sec:discussion}.

\section{Linear compressive sensing with deep image prior} \label{sec:CS}
We now analyze linear compressed Gaussian measurements of a vectorized image $x$, with a deep network prior. The reconstruction problem assumes the following form:
\begin{align} \label{eq:2layer}
&\min_{x} \quad \| y - A x \|^2\quad\text{s.t.}\quad x = G(\w;z) ,
\end{align}
where $A \in \mathbb{R}^{n \times d}$ is Gaussian matrix with $n < d$, unknown weight matrices $\w$ and latent code $z$ which is fixed. We solve this problem via Algorithm \ref{algo:pgd}, Network Projected Gradient Descent (Net-PGD) for compressed sensing recovery.

Specifically, we break down the minimization into two parts; we first solve an unconstrained loss minimization of the objective function in Eq. \ref{eq:2layer} by implementing one step of gradient descent in Step 3 of Algorithm \ref{algo:pgd}. The update $v^t$ typically does not adhere to the deep network prior constraint $v^t \not\in \S$. To ensure that this happens, we solve a projection step in Line 4 of Algorithm \ref{algo:pgd}, which happens to be the same as fitting a deep network prior to a noisy image. We iterate through this procedure in an alternating fashion until the estimates $x^t$ converge to $x^*$ within error factor $\epsilon$.

We further establish convergence guarantees for Algorithm \ref{algo:pgd} in Theorem \ref{thm:cs-converge}.
\begin{algorithm}[t]
	\caption{Net-PGD for compressed sensing recovery.}
	\label{algo:pgd}
	\begin{algorithmic}[1]
		
		\STATE \textbf{Input:} $y, A, z = \text{vec}(Z_1), \eta, T=\log \frac{1}{\epsilon}$
		\FOR{$t = 1, \cdots,T$}
		\STATE	$v^{t}\: \leftarrow x^t - \eta A^\top (Ax^t-y)$ \hspace{0.99cm}\COMMENT{gradient step for least squares}
		\STATE	$\w^{t} \leftarrow \arg\min\limits_{\w}\|v^t - G(\w;z)\|$ \hspace{0.4cm} \COMMENT{projection to range of deep network}
		\STATE $x^{t+1} \: \leftarrow G(\w^t;z)$
		\ENDFOR
		\STATE \textbf{Output} $ \hat{x} \leftarrow {x}^{T}$.
	\end{algorithmic}
\end{algorithm}

\begin{restatable}{theorem}{thmCS} \label{thm:cs-converge}
	
	Suppose the sampling matrix $A^{n\times d}$ satisfies $(\S,1-\alpha,1+\alpha)$-REC with high probability then, Algorithm \ref{algo:pgd}, with $\eta$ small enough, produces $\hat{x}$ such that $\|\hat{x}-x^*\| \leq \epsilon$ and requires $T \propto \log \frac{1}{\epsilon}$ iterations.
\end{restatable}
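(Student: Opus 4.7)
The plan is the standard projected-gradient-descent contraction argument, specialized to the deep-network range $\S$. I will show that one Net-PGD iteration reduces the error $\|x^t - x^*\|$ by a fixed factor $\rho<1$, which after $T = \Theta(\log(1/\epsilon))$ iterations drives the error below $\epsilon$. The two ingredients are (i) optimality of the projection in Line~4 of Algorithm~\ref{algo:pgd}, which ties the distance from $x^{t+1}$ to $x^*$ back to the distance from the unconstrained iterate $v^t$ to $x^*$, and (ii) an enlargement of the $(\S,1-\alpha,1+\alpha)$-REC of Lemma~\ref{lem:leftRIP} to the difference set $\S - \S$, since the error vectors live in $\S - \S$ rather than in $\S$ itself.

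Fixing an iteration $t$, write $e^t := x^t - x^*$ and assume the noiseless model $y = Ax^*$. Because $x^{t+1} = G(\w^t;z)$ is the nearest point of $\S$ to $v^t = x^t - \eta A^\top(Ax^t - y)$ and $x^* \in \S$, we get $\|x^{t+1} - v^t\|^2 \leq \|x^* - v^t\|^2$. A short algebraic rearrangement (expand both squares, cancel $\|v^t\|^2$, and collect) yields the key identity
\[
\|e^{t+1}\|^2 \;\leq\; 2\,\langle e^{t+1},\, v^t - x^*\rangle \;=\; 2\,\langle e^{t+1},\, (I - \eta A^\top A)\, e^t \rangle .
\]
Now both $e^t$ and $e^{t+1}$ lie in $\S - \S$. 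By rerunning the linearization and union-of-subspaces counting that underlies Lemma~\ref{lem:leftRIP} with each ReLU sign pattern specified for the pair $(x^{t+1},x^*)$ rather than a single image, $\S - \S$ is covered by at most $d^{O(k_1 k_2)}$ subspaces of dimension $O(k_1 k_2)$. A Gaussian oblivious subspace embedding plus a union bound then gives the restricted operator-norm estimate
\[
\sup_{u,v \in \S - \S,\; u,v \neq 0} \frac{|\langle u,\,(I - \eta A^\top A)\, v\rangle|}{\|u\|\,\|v\|} \;\leq\; \alpha'
\]
with probability $1 - e^{-\Omega(\alpha'^2 n)}$, provided $\eta$ is chosen near $1/(1+\alpha)$ and $n = \Omega(k_1 k_2 \log d / \alpha'^2)$. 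Substituting into the identity and applying Cauchy--Schwarz gives $\|e^{t+1}\| \leq 2\alpha' \|e^t\|$, so taking $\alpha' < 1/2$ yields a geometric contraction $\|e^{t+1}\| \leq \rho\,\|e^t\|$ with $\rho < 1$. Iterating produces $\|e^T\| \leq \rho^T \|e^0\| \leq \epsilon$ after $T = \Theta(\log(\|e^0\|/\epsilon))$ steps.

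The main obstacle I anticipate is Line~4 itself: the projection of $v^t$ onto $\S$ is a non-convex deep-decoder fitting problem, so exact minimization cannot actually be guaranteed. The cleanest resolution is to postulate that the inner solver returns an approximate projection, i.e.\ $\|v^t - G(\w^t;z)\| \leq \min_\w \|v^t - G(\w;z)\| + \zeta$; this introduces only an additive $O(\zeta)$ perturbation in the recursion and does not affect the $\Theta(\log(1/\epsilon))$ iteration count. A secondary subtlety is the REC-on-difference-set extension sketched above, but this is essentially bookkeeping: the sign-pattern counting of Lemma~\ref{lem:leftRIP} already carries over to pairs of points, at the cost of at most doubling the subspace dimension and hence only a constant-factor increase in the sample complexity $n$.
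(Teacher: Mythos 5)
Your argument is sound in its essentials but takes a genuinely different route from the paper's. Both proofs start from the same projection inequality $\|x^{t+1}-v^t\|^2\leq\|x^*-v^t\|^2$, but from there the paper works with the loss $L(x^t)=\|y-Ax^t\|^2$: it expands $L(x^{t+1})-L(x^t)$, applies the lower REC to $\|x^*-x^t\|^2$ and the upper REC to $\|A(x^{t+1}-x^t)\|^2$ separately, obtains a loss contraction $L(x^{t+1})\leq\nu L(x^t)$ for $\eta$ in a window roughly $\bigl(\tfrac{1}{2(1-\alpha)^2},\tfrac{1}{1+\alpha}\bigr)$, and only at the end converts back to a distance contraction via REC. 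You instead go directly to the iterate error via the three-point identity $\|e^{t+1}\|^2\leq 2\langle e^{t+1},(I-\eta A^\top A)e^t\rangle$ and a restricted near-orthogonality bound on $I-\eta A^\top A$ over the difference set — the Blumensath--Davies/IHT-style contraction. Your route is cleaner and yields the distance contraction in one step; its price is that the bilinear bound $|\langle u,(I-\eta A^\top A)v\rangle|\leq\alpha'\|u\|\|v\|$ for $u,v\in\S-\S$ requires (via polarization) REC on $u\pm v$, i.e.\ on a union of subspaces of dimension up to $4k_1k_2$ rather than $2k_1k_2$ — a quadrupling, not the "doubling" you state, though still only a constant-factor change in $n$. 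The paper gets away with the weaker Corollary~\ref{cor:diff} (REC on differences only) plus the operator-norm statement of Corollary~\ref{cor:eig}. Two further points of comparison: both your argument and the paper's actually need $\eta$ in a specific range near $1/(1+\alpha)$ rather than merely "small enough," so neither matches the theorem statement literally; and you are more candid than the paper about Line~4 being a non-convex fit that cannot be guaranteed to return an exact projection — the paper silently assumes exact minimization and defers the issue to its appendix, whereas your approximate-projection relaxation with an additive $O(\zeta)$ term is a legitimate and arguably preferable patch.
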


\textbf{\textit{Proof sketch:}} The proof of this theorem predominantly relies on our new set-restricted REC result and uses standard techniques from compressed sensing theory. Indicating the loss function in Eq. \ref{eq:2layer} as $L(x^t) = \|y-Ax^t\|^2$, we aim to establish a contraction of the form $L(x^{t+1}) < \nu L(x^t)$, with $\nu < 1$. To achieve this, we combine the projection criterion in Step 4 of Algorithm \ref{algo:pgd}, which strictly implies that 
$$
\|x^{t+1}-v^t\| \leq \|x^*-v^t\|
$$
and $v^t = x^t - \eta A^\top (Ax^t - y)$ from Step 3 of Algorithm \ref{algo:pgd}, where $\eta$ is chosen appropriately. Therefore,
\[
\|x^{t+1}-x^t+\eta A^\top A(x^t-x^*)\|^2\leq \|x^{*}-x^t+\eta A^\top A(x^t-x^*)\|^2.
\]
Furthermore, we utilize $(\S,1-\alpha,1+\alpha)$-REC and its Corollary \ref{cor:diff} (refer Appendix \ref{sec:appendix}) which apply to $x^*,x^t,x^{t+1} \in \S$, to show that
\[
L(x^{t+1}) \leq \nu L(x^t) 
\]
and subsequently the error contraction $\|x^{t+1}-x^*\|\leq \nu_o\|x^t-x^*\|$, with $\nu,\nu_o<1$ to guarantee linear convergence of Net-PGD for compressed sensing recovery. This convergence result implies that Net-PGD requires $T\propto \log 1/\epsilon$ iterations to produce $\hat{x}$ within $\epsilon$-accuracy of $x^*$. The complete proof of Theorem \ref{thm:cs-converge} can be found in Appendix \ref{sec:appendix}. In Appendix \ref{sec:projection} we provide some exposition on the projection step (line 4 of Algorithm \ref{algo:pgd}).

\section{Compressive phase retrieval under deep image prior} \label{sec:PR}
In compressive phase retrieval, one wants to reconstruct a signal $x\approx x^* \in \S$ from measurements of the form $y=|Ax^*|$ and therefore the objective is to minimize the following  
	\begin{align} \label{eq:2pr}
&\min_{x} \quad \| y - |A x| \|^2 \quad \text{s.t.}\quad x = G(\w;z), 
\end{align}
	where $n<d$ and $A$ is Gaussian, $z$ is a fixed seed and weights $\w$ need to be estimated. 
We propose a Network Projected Gradient Descent (Net-PGD) for compressive phase retrieval to solve this problem, which is presented in Algorithm \ref{algo:pgdpr}. 
	
Algorithm \ref{algo:pgdpr} broadly consists of two parts. For the first part, in Line 3 we estimate the phase of the current estimate and in Line 4 we use this to compute the Wirtinger gradient \cite{twf} and execute one step for solving an unconstrained phase retrieval problem with gradient descent. The second part of the algorithm is (Line 5), estimating the weights of the deep network prior with noisy input $v^t$. This is the projection step and ensures that the output $\w^t$ and subsequently the image estimate $x^t = G(\w^t;z)$ lies in the range of the decoder $G(\cdot)$ outlined by set $\S$.
	
	\begin{algorithm}[t]
		\caption{Net-PGD for compressive phase retrieval.}
		
		\label{algo:pgdpr}
		\begin{algorithmic}[1]
			
			\STATE \textbf{Input:} $A, z = \text{vec}(Z_1), \eta, T=\log\frac{1}{\epsilon}, x^0$ s.t. $\|x^0-x^*\|\leq \delta_i\|x^*\|$.
			\FOR{$t = 1, \cdots,T$}
			\STATE $p^t \leftarrow \text{sign}(Ax^t)$ \hspace{3cm}\COMMENT{phase estimation}
			\STATE	$v^{t} \leftarrow x^t - \eta A^\top (Ax^t-y\circ p^t)$ \hspace{0.8cm} \COMMENT{gradient step for phase retrieval}
			\STATE	$\w^{t} \leftarrow \arg\min\limits_{\w}\|v^t - G(\w;z)\|$ \hspace{0.86cm} \COMMENT{projection to range of deep network}
			\STATE $x^{t+1} \leftarrow G(\w^t;z)$
			\ENDFOR
			\STATE \textbf{Output} $ \hat{x}\leftarrow {x}^{T}$.
		\end{algorithmic}
	\end{algorithm}

We highlight that the problem in Eq. \ref{eq:2pr} is significantly more challenging than the one in Eq. \ref{eq:2layer}. The difficulty hinges on estimating the missing phase information accurately. For a real-valued vectors, there are $2^n$ different phase vectors $p=\text{sign}(Ax)$ for a fixed choice of $x$, which satisfy $y=|Ax|$, moreover the entries of $p$ are restricted to $\{1,-1\}$. Hence, phase estimation is a non-convex problem. Therefore, with Algorithm \ref{algo:pgdpr} the problem in Eq.\ref{eq:2pr} can only be solved to convergence locally; an initialization scheme is required to establish global convergence guarantees. We highlight the guarantees of Algorithm \ref{algo:pgdpr} in Theorem \ref{thm:pr-converge}.
	
	\begin{restatable}{theorem}{thmPR} \label{thm:pr-converge}
		
		Suppose the sampling matrix $A^{n\times d}$ with Gaussian entries satisfies $(\S,1-\alpha,1+\alpha)$-REC with high probability, Algorithm \ref{algo:pgdpr} solves Eq. \ref{eq:2pr} with $\eta$ small enough, such that $\|\hat{x}-x^*\| \leq \epsilon$, as long as the weights of the two-layer decoder are initialized appropriately and the number of  measurements is $n = O\left( k_1k_2 \log d\right)$.
	\end{restatable}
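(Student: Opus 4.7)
The plan is to adapt the Net-PGD contraction argument of Theorem \ref{thm:cs-converge} and treat the phase-estimation step as a perturbation of the linear gradient step. Setting $h^t := x^t-x^*$, $p^* := \text{sign}(Ax^*)$, and using $y\circ p^t = |Ax^*|\circ p^t$ together with $Ax^* = |Ax^*|\circ p^*$, one obtains
\begin{equation*}
Ax^t - y\circ p^t \;=\; A h^t + q^t, \qquad q^t := |Ax^*|\circ(p^*-p^t),
\end{equation*}
so the gradient step becomes $v^t = x^t - \eta A^\top A h^t - \eta A^\top q^t$; Net-PGD for compressive phase retrieval is therefore Net-PGD for CS driven by the extra ``phase-error'' vector $A^\top q^t$. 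From Line 5 and $x^*\in\S$, the projection step yields $\|x^{t+1}-v^t\|\leq \|x^*-v^t\|$; expanding and rearranging exactly as in the proof of Theorem \ref{thm:cs-converge} gives an inequality whose $h^t$-dependent part is controlled by the $(\S,1-\alpha,1+\alpha)$-REC of Lemma \ref{lem:leftRIP} applied to $x^{t+1}-x^*,\,h^t \in \S-\S$ (via Corollary \ref{cor:diff}), contributing a pure contraction factor $\nu_o<1$ for $\eta$ small. The only new term beyond the CS analysis is $2\eta\langle x^{t+1}-x^*,\,A^\top q^t\rangle = 2\eta\langle A(x^{t+1}-x^*),\,q^t\rangle$.

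The main obstacle is bounding this phase-error inner product; all the non-convexity of phase retrieval is concentrated here. I would handle it with two standard ingredients. First, a Gaussian sign-concentration argument (as in \cite{copram,twf}) shows that whenever $\|h^t\|\leq \delta\|x^*\|$ with $\delta$ small, the set $\mathcal{E}^t:=\{i : \text{sign}(a_i^\top x^t)\neq \text{sign}(a_i^\top x^*)\}$ satisfies $|\mathcal{E}^t|/n \lesssim \delta$, and hence $\|q^t\|\lesssim \delta\|x^*\|$ with high probability over $A$. Second, to convert this into a uniform-in-$u$ bound on $\langle A u, q^t\rangle$ for $u\in\S-\S$, I would reuse the linearization-plus-covering scheme that drives Lemma \ref{lem:leftRIP}: on each of the at most $d^{k_1 k_2}$ linearized subspaces carved out by the ReLU sign patterns, standard Gaussian concentration gives $|\langle Au, q^t\rangle|\lesssim \delta\|u\|\|x^*\|$ with failure probability $e^{-cn}$, and the union bound closes as long as $n = \Omega(k_1 k_2 \log d)$.

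Combining the two pieces produces a per-iterate bound $\|h^{t+1}\|\leq \rho\|h^t\|$ with $\rho = \nu_o + C\eta\delta < 1$ for $\alpha$, $\eta$, and the initialization radius $\delta_i$ taken sufficiently small. An induction then preserves the invariant $\|h^t\|\leq \delta_i\|x^*\|$, so the sign-mismatch bound applies at every iterate and linear convergence follows in $T = O(\log 1/\epsilon)$ steps. The stated sample complexity $n = O(k_1 k_2 \log d)$ is inherited from Lemma \ref{lem:leftRIP} together with the matching covering-number step for the $q^t$-bound. The only genuinely new ingredient beyond the CS argument is the hypothesis of a good initialization $x^0$ with $\|x^0-x^*\|\leq \delta_i\|x^*\|$, which I would treat as a separate construction, e.g.\ via a spectral scheme adapted to the range of the untrained decoder in the spirit of \cite{copram,PRGAN}, rather than as part of the contraction analysis itself.
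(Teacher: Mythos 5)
Your proposal follows essentially the same route as the paper's proof: your decomposition $Ax^t - y\circ p^t = Ah^t + q^t$ is the paper's (your $A^\top q^t$ is exactly the paper's phase-error term $\varepsilon_p^t$, since $Ax^*\circ(1-p^*\circ p^t)=|Ax^*|\circ(p^*-p^t)$); the linear part is contracted by the same REC machinery (Corollary \ref{cor:eig}); and the phase-error term is controlled by the same two ingredients --- per-point Gaussian sign-concentration plus a union bound over the $d^{k_1k_2}$ linearized subspaces --- which is precisely the content of the paper's Lemma \ref{lem:phaseerr}. One caveat: as written, your intermediate bound $\|q^t\|\lesssim \delta\|x^*\|$ with $\delta$ the (fixed) initialization radius would only yield contraction to a ball of radius $O(\delta\|x^*\|)$, which is vacuous under the initialization hypothesis $\|h^0\|\leq\delta_i\|x^*\|$; to obtain your claimed recursion $\|h^{t+1}\|\leq\rho\|h^t\|$ and hence $\epsilon$-accuracy you need the error-proportional form $\|\varepsilon_p^t\|\leq\delta_o\|x^t-x^*\|$ with a small absolute constant $\delta_o$ (the paper's Lemma \ref{lem:phaseerr} gives $\delta_o=0.36$ under $\delta_i=0.1$), which is what the RWF-type lemma you cite actually provides --- so this is an imprecision of statement rather than of method. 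The only other (cosmetic) difference is that you expand the square of the projection inequality as in the CS proof, whereas the paper applies the triangle inequality to get $\|x^{t+1}-x^*\|\leq 2\|x^*-v^{t+1}\|$ and absorbs the resulting factor of $2$ into the contraction constant $\rho=2(\rho_o+\eta\delta_o)$.
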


\textit{\textbf{Proof sketch:}}
The proof for Theorem \ref{thm:pr-converge} relies on two important results; $(\S,1-\alpha,1+\alpha)$-REC and  Lemma \ref{lem:phaseerr} which establishes a bound on the phase estimation error. Formally, the update in Step 4 of Algorithm \ref{algo:pgdpr} can be re-written as
\begin{align*}
v^{t+1} &= x^t - \eta A^\top \left(Ax^t - Ax^* \circ \text{sign}(Ax^*) \circ\text{sign}(Ax^t)\right) = x^t - \eta A^\top \left(Ax^t - Ax^*\right) - \eta \varepsilon_p^t
\end{align*}
where $\varepsilon_p^t :=  A^\top  A x^*\circ(1- \text{sign}(Ax^*) \circ\text{sign}(Ax^t))$ is \textit{phase estimation} error.

If $\text{sign}(Ax^*) \approx \text{sign}(Ax^t)$, then the above resembles the gradient step from the linear compressive sensing formulation. Thus, if $x^0$ is initialized well, the error due to phase mis-match $\varepsilon_p^t$ can be bounded, and subsequently, a convergence result can be formulated.

Next, Step 5 of Algorithm \ref{algo:pgdpr} learns weights $\w^t$ that produce $x^t = G(\w^t;z)$, such that
\begin{align*}
\|x^{t+1}-v^t\| &\leq \|x^t-v^t\| 
\end{align*}
for $t=\{1,2,\dots T\}$. Then, the above projection rule yields:
$$
\|x^{t+1} - v^{t+1} + v^{t+1} -  x^* \| \leq \|x^{t+1} - v^{t+1}\| + \|x^* - v^{t+1}\| \leq 2 \|x^*-v^{t+1}\|,
$$
Using the update rule from Eq. \ref{eq:prstep} and plugging in for $v^{t+1}$:
\begin{align*}
\frac{1}{2}\|x^{t+1}-x^*\| &\leq \|(1-\eta A^\top A)h^t\| + \|\varepsilon_p^t\|
\end{align*}
where $\eta$ is chosen appropriately. The rest of the proof relies on bounding the first term via  matrix norm inequalities using Corollary \ref{cor:eig} (in Appendix \ref{sec:appendix}) of $(\S,1-\alpha,1+\alpha)$-REC as $\|(1-\eta A^\top A)h^t\| \leq  \rho_o \|h^t\|$
and the second term  is bounded via Lemma \ref{lem:phaseerr} as $\|\varepsilon_p^t\| \leq \delta_o \|x^t-x^*\|$ as long as $\|x^0-x^*\| \leq \delta_i \|x^*\|$. Hence we obtain a convergence criterion of the form
\begin{align*}
\|x^{t+1}-x^*\| \leq 2(\rho_o + \eta\delta_o)\|x^t - x^*\| := \rho \|x^t - x^*\|.
\end{align*}
where $\rho<1$. Note that this proof relies on a bound on the phase error $\|\varepsilon_p^t\|$ which is established via Lemma \ref{lem:phaseerr}. The complete proof for Theorem \ref{thm:pr-converge} can be found in Appendix \ref{sec:appendix}. In Appendix \ref{sec:projection} we provide some exposition on the projection step (line 5 of Algorithm \ref{algo:pgdpr}). In our experiments (Section \ref{sec:exps}) we note that a uniform random initialization of the weights $\w^0$ (which is common in training neural networks), to yield $x^0 = G(\w^0;z)$ is sufficient for Net-PGD to succeed for compressive phase retrieval. In  Appendix \ref{sec:addexp} we show experimental evidence to support this claim.

\begin{figure}[!t]
	\centering
	\begin{tabular}{ccc}
	\resizebox{0.32\textwidth}{!}{
		\begin{tabular}{cccccc}
	 Original & Compressed &  Net-GD & Net-PGD & Lasso &  TVAL3\\
			
			\includegraphics[width=0.1\textwidth]{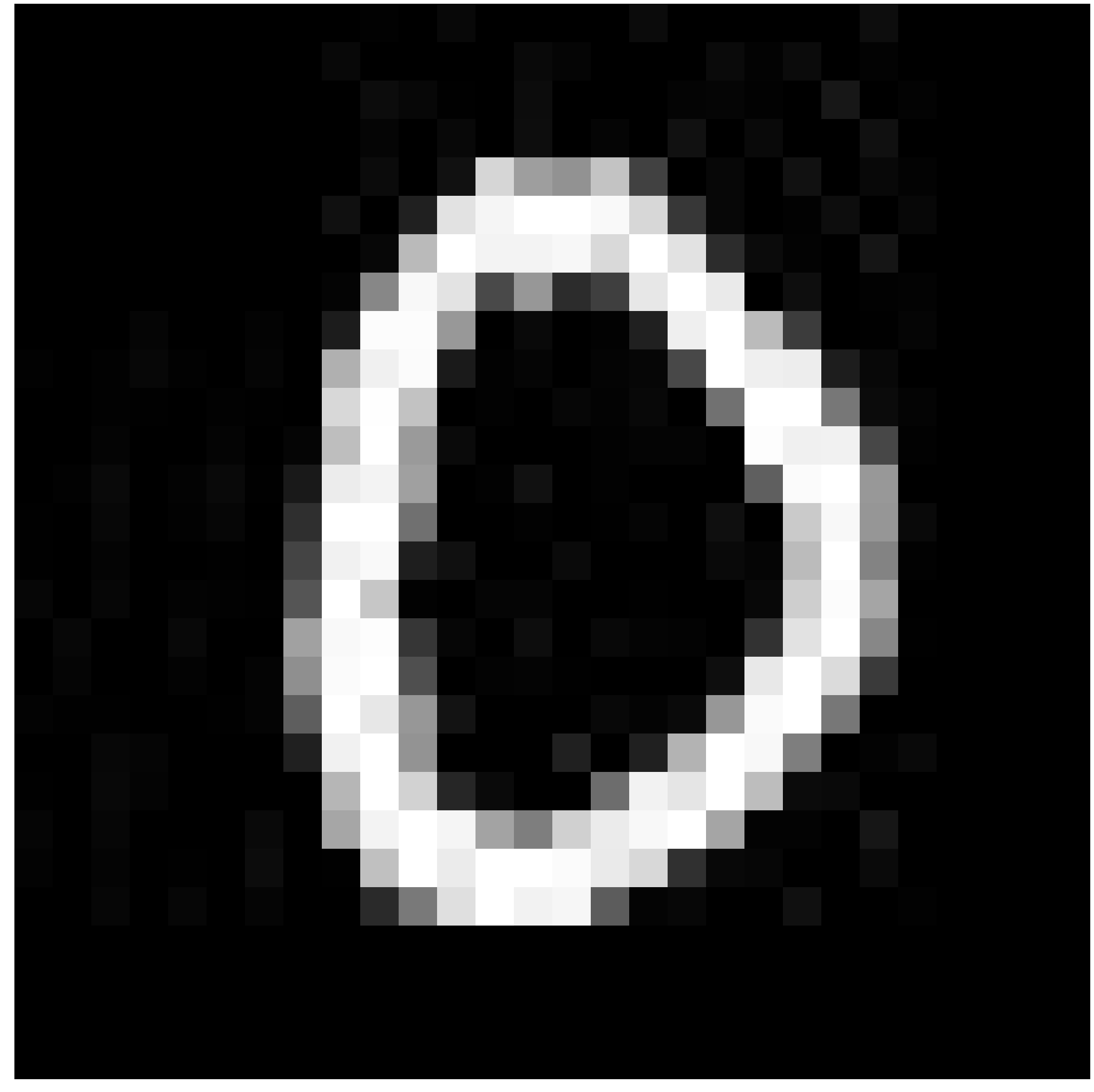} &
			\includegraphics[width=0.1\textwidth]{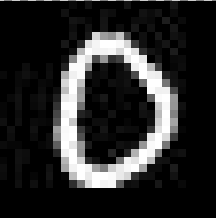}	&		
			\includegraphics[width=0.1\textwidth]{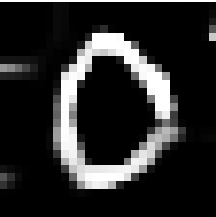} &
			\includegraphics[width=0.1\textwidth]{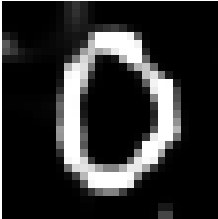} &
			\includegraphics[width=0.1\textwidth]{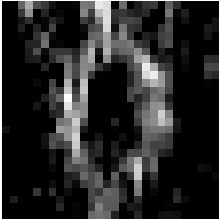} & 
			\includegraphics[width=0.1\textwidth]{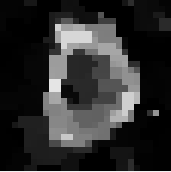}			
			\\
			\includegraphics[width=0.1\textwidth]{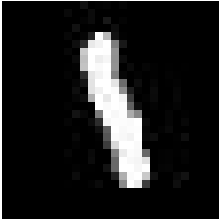} &
			\includegraphics[width=0.1\textwidth]{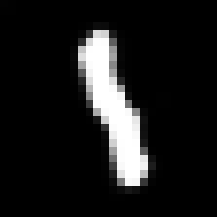}	&		
			\includegraphics[width=0.1\textwidth]{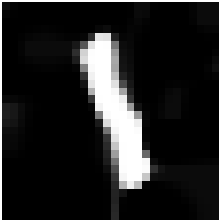} &
			\includegraphics[width=0.1\textwidth]{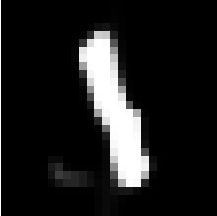} &
			\includegraphics[width=0.1\textwidth]{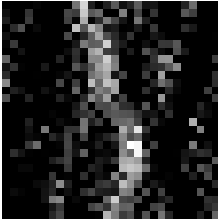} &
			\includegraphics[width=0.1\textwidth]{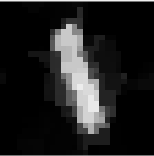} 			
			\\ 
			\includegraphics[width=0.1\textwidth]{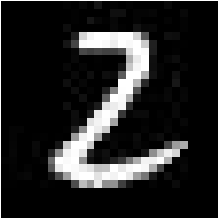} &
			\includegraphics[width=0.1\textwidth]{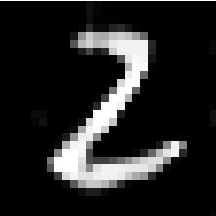}	&		
			\includegraphics[width=0.1\textwidth]{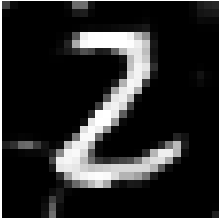} &
			\includegraphics[width=0.1\textwidth]{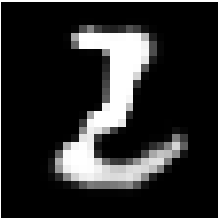} &
			\includegraphics[width=0.1\textwidth]{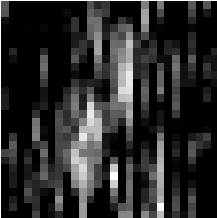} &
			\includegraphics[width=0.1\textwidth]{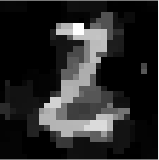}			
			\\
			
			\includegraphics[width=0.1\textwidth]{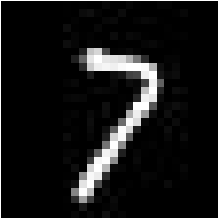} &
			\includegraphics[width=0.1\textwidth]{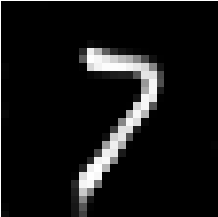}	&		
			\includegraphics[width=0.1\textwidth]{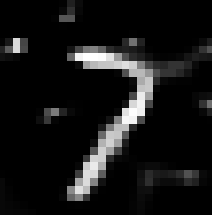} &
			\includegraphics[width=0.1\textwidth]{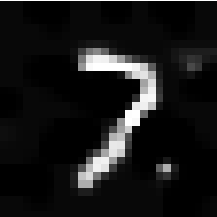} &
			\includegraphics[width=0.1\textwidth]{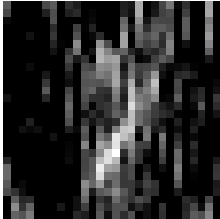} &
			\includegraphics[width=0.1\textwidth]{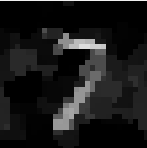} 
			\\
			\includegraphics[width=0.1\textwidth]{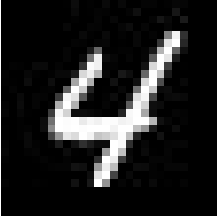} &
			\includegraphics[width=0.1\textwidth]{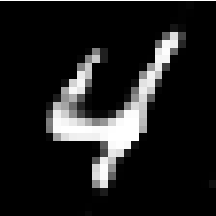}	&		
			\includegraphics[width=0.1\textwidth]{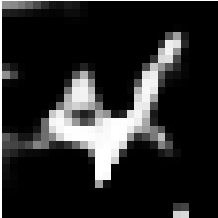} &
			\includegraphics[width=0.1\textwidth]{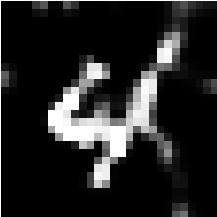} &
			\includegraphics[width=0.1\textwidth]{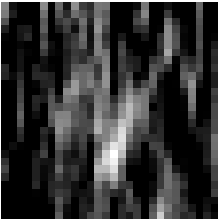} &
			\includegraphics[width=0.1\textwidth]{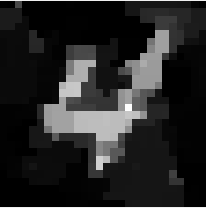} 			
			\\
			\includegraphics[width=0.1\textwidth]{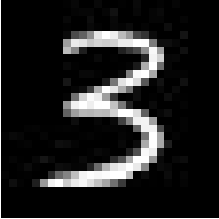} &
			\includegraphics[width=0.1\textwidth]{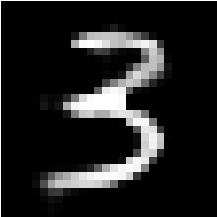}	&		
			\includegraphics[width=0.1\textwidth]{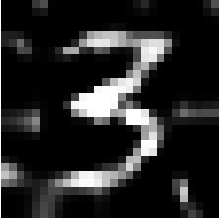} &
			\includegraphics[width=0.1\textwidth]{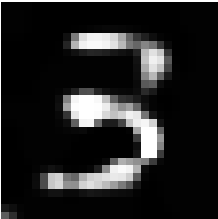} &
			\includegraphics[width=0.1\textwidth]{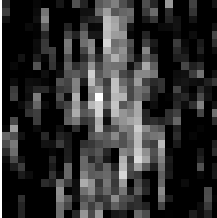} &
			\includegraphics[width=0.1\textwidth]{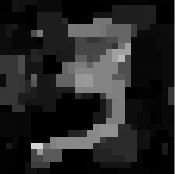}			
	\end{tabular}} & 	\resizebox{0.32\textwidth}{!}{
\begin{tabular}{ccccc}
Original &Compressed & Net-GD & Net-PGD & Lasso\\
\includegraphics[width=0.1\textwidth]{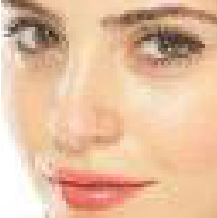} &
\includegraphics[width=0.1\textwidth]{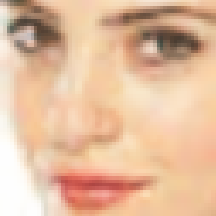}	&		
\includegraphics[width=0.1\textwidth]{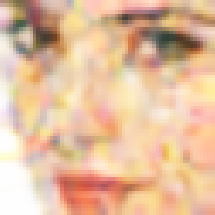} &
\includegraphics[width=0.1\textwidth]{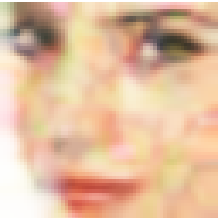} &
\includegraphics[width=0.1\textwidth]{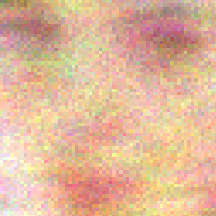} \\
\includegraphics[width=0.1\textwidth]{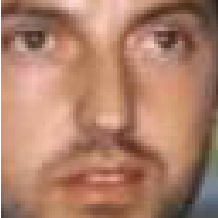} &
\includegraphics[width=0.1\textwidth]{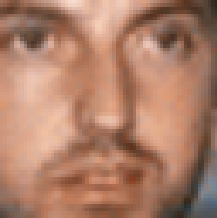}	&		
\includegraphics[width=0.1\textwidth]{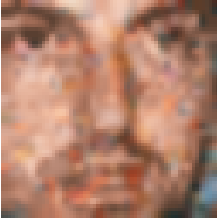} &
\includegraphics[width=0.1\textwidth]{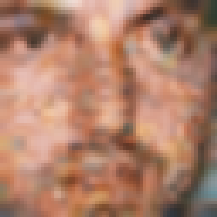} &
\includegraphics[width=0.1\textwidth]{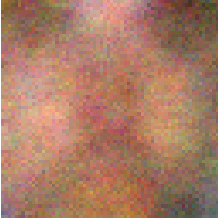} \\
\includegraphics[width=0.1\textwidth]{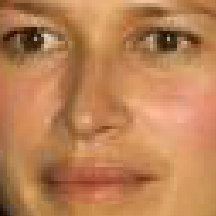} &
\includegraphics[width=0.1\textwidth]{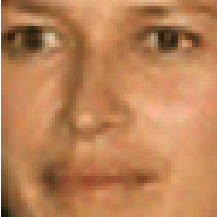}	&		
\includegraphics[width=0.1\textwidth]{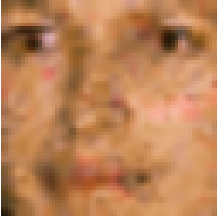} &
\includegraphics[width=0.1\textwidth]{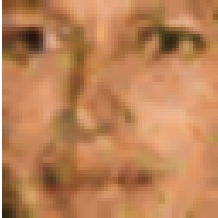} &
\includegraphics[width=0.1\textwidth]{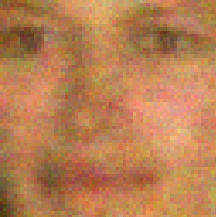} \\
\includegraphics[width=0.1\textwidth]{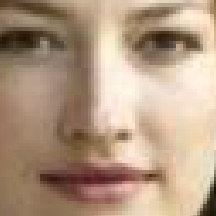} &
\includegraphics[width=0.1\textwidth]{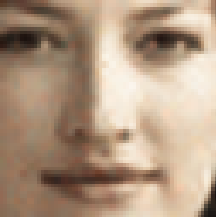}	&		
\includegraphics[width=0.1\textwidth]{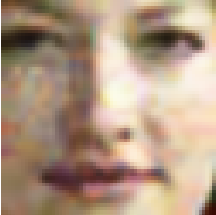} &
\includegraphics[width=0.1\textwidth]{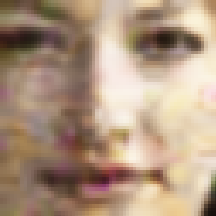} &
\includegraphics[width=0.1\textwidth]{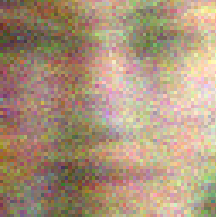} \\
\includegraphics[width=0.1\textwidth]{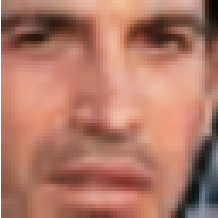} &
\includegraphics[width=0.1\textwidth]{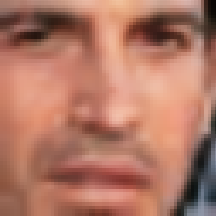}	&		
\includegraphics[width=0.1\textwidth]{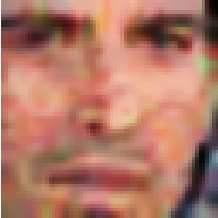} &
\includegraphics[width=0.1\textwidth]{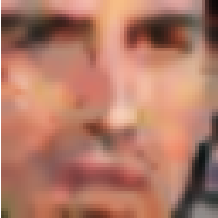} &
\includegraphics[width=0.1\textwidth]{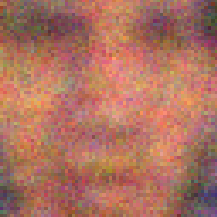} 
\end{tabular}} &
\begin{minipage}{0.32\textwidth}
\begin{tikzpicture}[scale=0.7]

\begin{axis}
[
width=1.5\textwidth,
xlabel= compression ratio $f$, 
ylabel= nMSE,
y label style={at={(0.1,0.5)}},
grid style = dashed,
grid=both,
legend style=
{at={(0.49,1.25)}, 
anchor= north , 
legend columns=2,
} ,
]

\addplot[color=red, solid,line width=2pt, mark size=2pt, mark=square*] plot coordinates {
(0.08,0.375)
(0.1,0.081)
(0.15,0.03)
(0.2,0.029)
(0.25,0.019)
(0.3,0.014)
};

\addplot[color=blue, dotted,line width=3pt, mark size=2pt, mark=square*] plot coordinates {
(0.08,0.229)
(0.1,0.077)
(0.15,0.04)
(0.2,0.031)
(0.25,0.024)
(0.3,0.016)
};

\addplot[color=darkgray, solid,line width=2pt, mark size=2pt, mark=square*] plot coordinates {
(0.08,1.4)
(0.1,1.3)
(0.15,1.13)
(0.2,0.75)
(0.25,0.51)
(0.3,0.48)
};

\addplot[color=cyan, solid,line width=2pt, mark size=2pt, mark=triangle*] plot coordinates {
(0.08,0.48)
(0.1,0.26)
(0.15,0.15)
(0.2,0.07)
(0.25,0.06)
(0.3,0.008)
};
\legend{Net-GD\\Net-PGD\\Lasso \\TVAL3\\}
\end{axis}
\hspace{-20pt}
\end{tikzpicture}
\end{minipage}
\\
(a) & (b) & (c)
	\end{tabular}
			
\normalfont
	\caption{ (CS) Reconstructed images from linear measurements (at compression rate $n/d=0.1$) with (a) $n=78$ measurements  for examples from MNIST, (b) $n=1228$ measurements for examples from CelebA, and (c) nMSE at different compression rates $f=n/d$ for fixed image from MNIST.} \label{fig:cs_img}

\end{figure}

\section{Experimental results} \label{sec:exps}
\textit{Dataset:} We use images from the MNIST database  and CelebA database to test our algorithms and reconstruct 6 grayscale (MNIST,  $28\times 28$ pixels $(d=784)$) and 5 RGB (CelebA) images. The CelebA dataset images are center cropped to size $64\times 64\times 3$ $(d=12288)$. The pixel values of all images are scaled to lie between 0 and 1.

\textit{Deep network architecture:}
We first optimize the deep network architecture which fit our example images such that $x^* \approx G(\w^*;z)$ (referred as ``compressed'' image). For MNIST images, the architecture was fixed to a 2 layer configuration $k_1=15,k_2=15,k_3 =10$, and for CelebA images, a 3 layer configuration with
$k_1=120,k_2=15,k_3=15,k_4=10$. Both architectures use bilinear upsampling operations. Further details on this setup can be found in Appendix \ref{sec:addexp}.

\textit{Measurement setup:}
We use a Gaussian measurement matrix of size $n\times d$ with $n$ varied such that (i) $n/d=0.08,0.1,0.15,0.2,0.25,0.3$ for compressive sensing and (ii)  $n/d=0.1,0.2,0.3,0.5,1,3$ for compressive phase retrieval. The elements of $A$ are picked such that $A_{i,j}\sim \mathcal{N}(0,1/n)$ and we report averaged reconstruction error values over $10$ different instantiations of $A$ for a fixed image (image of digit `0' from MNIST), network configuration and compression ratio $n/d$ .

\subsection{Compressive sensing}

\textit{Algorithms and baselines:}
We implement 4 schemes based on \textit{untrained} priors for solving CS, (i) gradient descent with deep network prior which solves Eq.\ref{eq:min2} (we call this Net-GD), similar to \cite{csdip} but without learned regularization (ii) Net-PGD, (iii) Lasso ($\ell_1$ regularization) with sparse prior in DCT basis and finally (iv) TVAL3 \cite{tval3} (Total Variation regularization). The TVAL3 code only works for grayscale images, therefore we do not use it for CelebA examples. The reconstructions are shown in Figure \ref{fig:cs_img} for images from (a) MNIST and (b) CelebA datasets. The implementation details can be found in Appendix \ref{sec:addexp}.

\begin{figure}
	\centering
	\begin{tabular}{ccc}
		\resizebox{0.28\textwidth}{!}{
			\begin{tabular}{ccccc}
				Original &Compressed & Net-GD & Net-PGD & Sparta\\ 
				\includegraphics[width=0.1\textwidth]{images/mnist/r0_img_0} &
				\includegraphics[width=0.1\textwidth]{images/mnist/rc_img_0}	&		
				\includegraphics[width=0.1\textwidth]{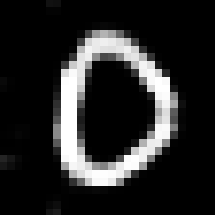} &
				\includegraphics[width=0.1\textwidth]{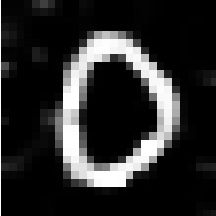} &
				\includegraphics[width=0.1\textwidth]{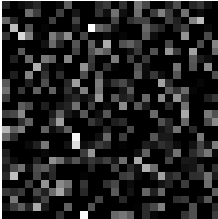} \\
				\includegraphics[width=0.1\textwidth]{images/mnist/r0_img_1} &
				\includegraphics[width=0.1\textwidth]{images/mnist/rc_img_1}	&		
				\includegraphics[width=0.1\textwidth]{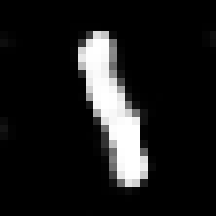} &
				\includegraphics[width=0.1\textwidth]{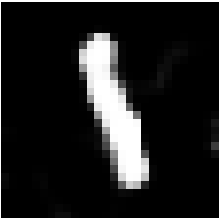} &
				\includegraphics[width=0.1\textwidth]{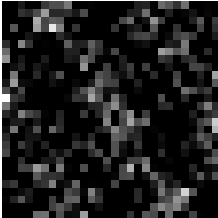} \\
				\includegraphics[width=0.1\textwidth]{images/mnist/r0_img_2} &
				\includegraphics[width=0.1\textwidth]{images/mnist/rc_img_2}	&		
				\includegraphics[width=0.1\textwidth]{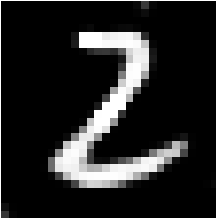} &
				\includegraphics[width=0.1\textwidth]{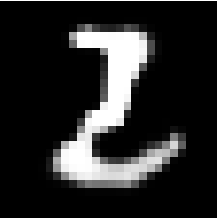} &
				\includegraphics[width=0.1\textwidth]{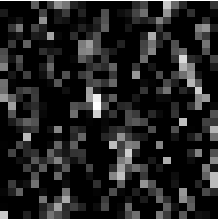} \\
				\includegraphics[width=0.1\textwidth]{images/mnist/r0_img_7} &
				\includegraphics[width=0.1\textwidth]{images/mnist/rc_img_7}	&		
				\includegraphics[width=0.1\textwidth]{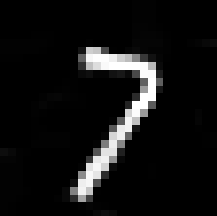} &
				\includegraphics[width=0.1\textwidth]{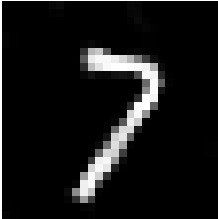} &
				\includegraphics[width=0.1\textwidth]{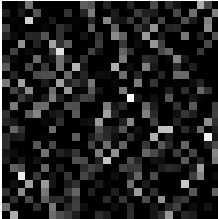} \\
				\includegraphics[width=0.1\textwidth]{images/mnist/r0_img_3} &
				\includegraphics[width=0.1\textwidth]{images/mnist/rc_img_3}	&		
				\includegraphics[width=0.1\textwidth]{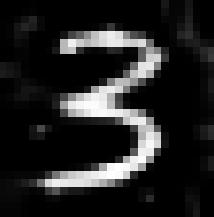} &
				\includegraphics[width=0.1\textwidth]{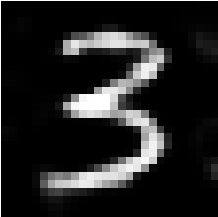} &
				\includegraphics[width=0.1\textwidth]{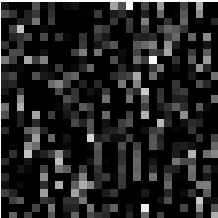} 
		\end{tabular}} & 
		
		\resizebox{0.3\textwidth}{!}{
			\begin{tabular}{cccccc}
				&& $n/d=0.5$ &&\\
				\includegraphics[width=0.1\textwidth]{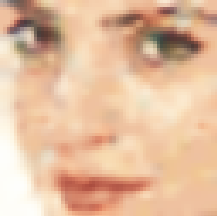} &
				\includegraphics[width=0.1\textwidth]{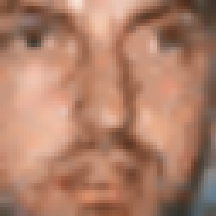} &
				\includegraphics[width=0.1\textwidth]{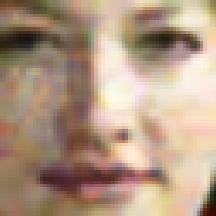} &
				\includegraphics[width=0.1\textwidth]{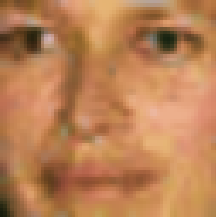}  &
				\includegraphics[width=0.1\textwidth]{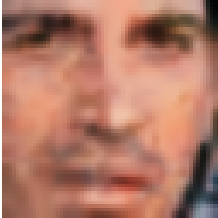} \\
				\includegraphics[width=0.1\textwidth]{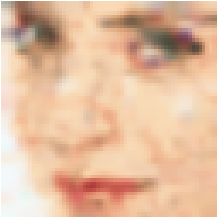} &	
				\includegraphics[width=0.1\textwidth]{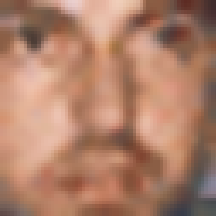}&	    						\includegraphics[width=0.1\textwidth]{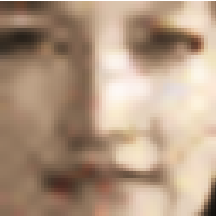} &
				\includegraphics[width=0.1\textwidth]{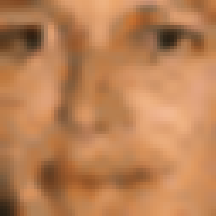} &
				\includegraphics[width=0.1\textwidth]{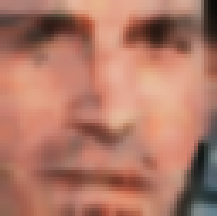}
				\\	
				&& $n/d=0.1$&&\\
				\includegraphics[width=0.1\textwidth]{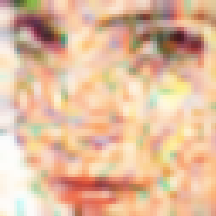} &
				\includegraphics[width=0.1\textwidth]{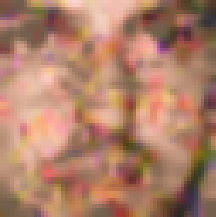} &	
				\includegraphics[width=0.1\textwidth]{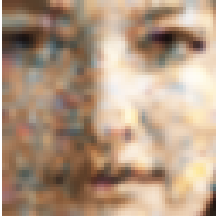} 	&		
				\includegraphics[width=0.1\textwidth]{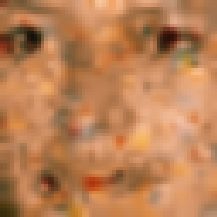} 	&	
				\includegraphics[width=0.1\textwidth]{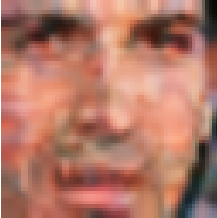} 	
				\\
				\includegraphics[width=0.1\textwidth]{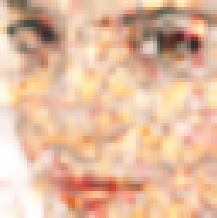} &
				\includegraphics[width=0.1\textwidth]{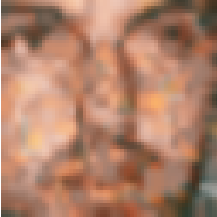} &	
				\includegraphics[width=0.1\textwidth]{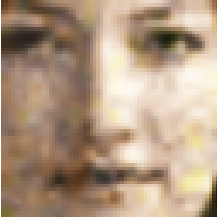} 	&	
				\includegraphics[width=0.1\textwidth]{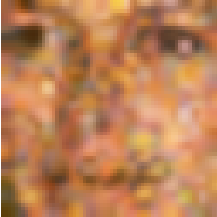} &				
				\includegraphics[width=0.1\textwidth]{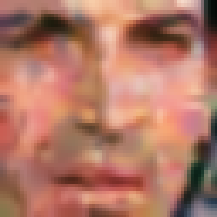} 
		\end{tabular}} &
		\begin{minipage}{0.32\textwidth}
			\centering
			\vspace{1cm}
			\begin{tikzpicture}[scale=0.8]
\begin{axis}
[
width=1.4\textwidth,
y label style={at={(0.1,0.5)}},
x label style={at={(0.5,0.05)}},
xlabel= compression ratio $f$, 
ylabel= nMSE,
grid style = dashed,
grid=both,
legend style=
{at={(0.45,0.6), font=\footnotesize}, 
anchor=south west, 
} ,
]

\addplot[color=red, solid,line width=2pt, mark size=2pt, mark=square*] plot coordinates {
(0.1,0.78)
(0.2,0.78)
(0.3,0.045)
(0.5,0.012)
(1,0.015)
(3,0.014)
};

\addplot[color=blue, dotted,line width=3pt, mark size=2pt, mark=square*] plot coordinates {
(0.1,0.85)
(0.2,0.79)
(0.3,0.07)
(0.5,0.017)
(1,0.017)
(3,0.016)
};

\addplot[color=darkgray, solid,line width=2pt, mark size=2pt, mark=square*] plot coordinates {
(0.3,0.89)
(0.5,0.75)
(1,0.41)
(3,0.004)
};

\legend{Net-GD\\Net-PGD\\Sparta \\}
\end{axis}
\end{tikzpicture}
		\end{minipage}\\
		(a) & (c) & (c)
	\end{tabular}	
	\caption{ (CPR) Reconstructed images from magnitude-only measurements (a) at compression rate of $n/d=0.3$ for MNIST, (b) at compression rates of $n/d=0.1,0.5$ for CelebA with (row 1,3) Net-GD and (row 2,4) Net-PGD, (c) nMSE at different compression rates $f=n/d$ for fixed image from MNIST.} \label{fig:pr_img}
\end{figure} 

\textit{Performance metrics:} We compare reconstruction quality using  normalized Mean-Squared Error (nMSE), which is calculated as $\|\hat{x} - x^*\|^2/\|x^*\|^2$. We plot the variation of the nMSE with different compression rates $f = n/d$ for all the algorithms tested averaged over all trials for a fixed image from MNIST (the digit `0') in Figure \ref{fig:cs_img} (c). We note that both Net-GD and Net-PGD produce superior reconstructions as compared to state of art. Running time performance is reported in Appendix \ref{sec:addexp}.

\subsection{Compressive phase retrieval}

\textit{Algorithms and baselines:}
We implement 3 schemes based on \textit{untrained} priors for solving CPR , (i) Net-GD (ii) Net-PGD and finally (iii) Sparse Truncated Amplitude Flow (Sparta) \cite{sparta}, with sparse prior in DCT basis for both datasets. The reconstructions are shown in Figure  \ref{fig:pr_img} for (a) MNIST and (b) CelebA datasets. We plot nMSE at varying compression rates for all algorithms averaged over all trials for a fixed image from MNIST (the digit `0') in Figure \ref{fig:pr_img}(c) and note that both Net-GD and Net-PGD outperform Sparta. 
Running term performance as well as goodness of random initialization scheme are discussed in Appendix \ref{sec:addexp}.


\bibliographystyle{unsrt}
\bibliography{biblio}

\begin{thebibliography}{10}

\bibitem{vincent2010stacked}
P.~Vincent, H.~Larochelle, I.~Lajoie, Y.~Bengio, and P.~Manzagol.
\newblock Stacked denoising autoencoders: Learning useful representations in a
  deep network with a local denoising criterion.
\newblock {\em Journal of machine learning research}, 11(Dec):3371--3408, 2010.

\bibitem{srcnn}
C.~Dong, C.~Loy, K.~He, and X.~Tang.
\newblock Image super-resolution using deep convolutional networks.
\newblock {\em IEEE transactions on pattern analysis and machine intelligence},
  38(2):295--307, 2016.

\bibitem{onenet}
J.~Chang, C.~Li, B.~P{\'o}czos, and B.~Kumar.
\newblock One network to solve them all—solving linear inverse problems using
  deep projection models.
\newblock In {\em 2017 IEEE International Conference on Computer Vision
  (ICCV)}, pages 5889--5898. IEEE, 2017.

\bibitem{prdeep}
C.~Metzler, P.~Schniter, A.~Veeraraghavan, and R.~Baraniuk.
\newblock prdeep: Robust phase retrieval with a flexible deep network.
\newblock In {\em International Conference on Machine Learning}, pages
  3498--3507, 2018.

\bibitem{gan}
I.~Goodfellow, J.~Pouget-Abadie, M.~Mirza, B.~Xu, D.~Warde-Farley, S.~Ozair,
  A.~Courville, and Y.~Bengio.
\newblock Generative adversarial nets.
\newblock In {\em Advances in neural information processing systems}, pages
  2672--2680, 2014.

\bibitem{CSGAN}
A.~Bora, A.~Jalal, E.~Price, and A.~Dimakis.
\newblock Compressed sensing using generative models.
\newblock In {\em Proceedings of the 34th International Conference on Machine
  Learning-Volume 70}, pages 537--546. JMLR. org, 2017.

\bibitem{PRGAN}
P.~Hand, O.~Leong, and V.~Voroninski.
\newblock Phase retrieval under a generative prior.
\newblock In {\em Advances in Neural Information Processing Systems}, pages
  9136--9146, 2018.

\bibitem{deepcs}
T.~Lillicrap Y.~Wu, M.~Rosca.
\newblock Deep compressed sensing.
\newblock {\em arXiv preprint arXiv:1905.06723}, 2019.

\bibitem{DIP}
D.~Ulyanov, A.~Vedaldi, and V.~Lempitsky.
\newblock Deep image prior.
\newblock In {\em Proceedings of the IEEE Conference on Computer Vision and
  Pattern Recognition}, pages 9446--9454, 2018.

\bibitem{DD}
R.~Heckel and P.~Hand.
\newblock Deep decoder: Concise image representations from untrained
  non-convolutional networks.
\newblock In {\em International Conference on Learning Representations}, 2018.

\bibitem{bpdn}
S.~Chen, D.~Donoho, and M.~Saunders.
\newblock Atomic decomposition by basis pursuit.
\newblock {\em SIAM review}, 43(1):129--159, 2001.

\bibitem{bm3d}
K.~Dabov, A.~Foi, V.~Katkovnik, and K.~Egiazarian.
\newblock Image denoising with block-matching and 3d filtering.
\newblock In {\em Image Processing: Algorithms and Systems, Neural Networks,
  and Machine Learning}, volume 6064, page 606414. International Society for
  Optics and Photonics, 2006.

\bibitem{CSC}
V.~Papyan, Y.~Romano, J.~Sulam, and M.~Elad.
\newblock Convolutional dictionary learning via local processing.
\newblock In {\em Proceedings of the IEEE International Conference on Computer
  Vision}, pages 5296--5304, 2017.

\bibitem{timeseriesdnp}
A.~Dimakis S.~Ravula.
\newblock One-dimensional deep image prior for time series inverse problems.
\newblock {\em arXiv preprint arXiv:1904.08594}, 2019.

\bibitem{audiodnp}
L.~Wolf M.~Michelashvili.
\newblock Audio denoising with deep network priors.
\newblock {\em arXiv preprint arXiv: arXiv:1904.07612}, 2019.

\bibitem{MLCSC}
J.~Sulam, V.~Papyan, Y.~Romano, and M.~Elad.
\newblock Multilayer convolutional sparse modeling: Pursuit and dictionary
  learning.
\newblock {\em IEEE Transactions on Signal Processing}, 66(15):4090--4104,
  2018.

\bibitem{csdip}
D.~Van~Veen, A.~Jalal, E.~Price, S.~Vishwanath, and A.~Dimakis.
\newblock Compressed sensing with deep image prior and learned regularization.
\newblock {\em arXiv preprint arXiv:1806.06438}, 2018.

\bibitem{cs}
D.~Donoho.
\newblock Compressed sensing.
\newblock {\em IEEE Transactions on information theory}, 52(4):1289--1306,
  2006.

\bibitem{cosamp}
D.~Needell and J.~Tropp.
\newblock Cosamp: Iterative signal recovery from incomplete and inaccurate
  samples.
\newblock {\em Applied and computational harmonic analysis}, 26(3):301--321,
  2009.

\bibitem{modelcs}
R.~Baraniuk, V.~Cevher, M.~Duarte, and C.~Hegde.
\newblock Model-based compressive sensing.
\newblock {\em IEEE Transactions on Information Theory}, 56:1982--2001, 2010.

\bibitem{copram}
G.~Jagatap and C.~Hegde.
\newblock Fast, sample-efficient algorithms for structured phase retrieval.
\newblock In {\em Advances in Neural Information Processing Systems}, pages
  4917--4927, 2017.

\bibitem{sparta}
G.~Wang, L.~Zhang, G.~Giannakis, M.~Ak{\c{c}}akaya, and J.~Chen.
\newblock Sparse phase retrieval via truncated amplitude flow.
\newblock {\em IEEE Transactions on Signal Processing}, 66(2):479--491, 2017.

\bibitem{ptychogan}
F.~Shamshad, F.~Abbas, and A.~Ahmed.
\newblock Deep ptych: Subsampled fourier ptychography using generative priors.
\newblock In {\em IEEE International Conference on Acoustics, Speech and Signal
  Processing (ICASSP)}, pages 7720--7724. IEEE, 2019.

\bibitem{csdd}
R.~Heckel.
\newblock Regularizing linear inverse problems with convolutional neural
  networks.
\newblock {\em arXiv preprint arXiv:1907.03100}, 2019.

\bibitem{tval3}
C.~Li, W.~Yin, and Y.~Zhang.
\newblock User’s guide for tval3: Tv minimization by augmented lagrangian and
  alternating direction algorithms.

\bibitem{wavelet}
S.~Mallat.
\newblock {\em A wavelet tour of signal processing}.
\newblock Elsevier, 1999.

\bibitem{DL}
M.~Aharon, M.~Elad, and A.~Bruckstein.
\newblock K-svd: An algorithm for designing overcomplete dictionaries for
  sparse representation.
\newblock {\em IEEE Transactions on signal processing}, 54(11):4311, 2006.

\bibitem{hyder2019alternating}
R.~Hyder, V.~Shah, C.~Hegde, and S.~Asif.
\newblock Alternating phase projected gradient descent with generative priors
  for solving compressive phase retrieval.
\newblock {\em arXiv preprint arXiv:1903.02707}, 2019.

\bibitem{shah2018solving}
V.~Shah and C.~Hegde.
\newblock Solving linear inverse problems using gan priors: An algorithm with
  provable guarantees.
\newblock In {\em 2018 IEEE International Conference on Acoustics, Speech and
  Signal Processing (ICASSP)}, pages 4609--4613. IEEE, 2018.

\bibitem{jagatap2019sample}
G.~Jagatap and C.~Hegde.
\newblock Sample-efficient algorithms for recovering structured signals from
  magnitude-only measurements.
\newblock {\em IEEE Transactions on Information Theory}, 2019.

\bibitem{twf}
Y.~Chen and E.~Candes.
\newblock Solving random quadratic systems of equations is nearly as easy as
  solving linear systems.
\newblock In {\em Advances in Neural Information Processing Systems}, pages
  739--747, 2015.

\bibitem{thwf}
T.~Cai, X.~Li, and Z.~Ma.
\newblock Optimal rates of convergence for noisy sparse phase retrieval via
  thresholded wirtinger flow.
\newblock {\em The Annals of Statistics}, 44(5):2221--2251, 2016.

\bibitem{DCGAN}
A.~Radford, L.~Metz, and S.~Chintala.
\newblock Unsupervised representation learning with deep convolutional
  generative adversarial networks.
\newblock {\em arXiv preprint arXiv:1511.06434}, 2015.

\bibitem{oymak2019towards}
S.~Oymak and M.~Soltanolkotabi.
\newblock Towards moderate overparameterization: global convergence guarantees
  for training shallow neural networks.
\newblock {\em arXiv preprint arXiv:1902.04674}, 2019.

\bibitem{du2018gradient}
S.~Du, X.~Zhai, B.~Poczos, and A.~Singh.
\newblock Gradient descent provably optimizes over-parameterized neural
  networks.
\newblock In {\em International Conference on Learning Representations}, 2018.

\bibitem{zhang2016reshaped}
H.~Zhang and Y.~Liang.
\newblock Reshaped wirtinger flow for solving quadratic system of equations.
\newblock In {\em Advances in Neural Information Processing Systems}, pages
  2622--2630, 2016.

\bibitem{rwf}
Huishuai Zhang and Yingbin Liang.
\newblock Reshaped wirtinger flow for solving quadratic system of equations.
\newblock In {\em Advances in Neural Information Processing Systems}, pages
  2622--2630, 2016.

\bibitem{Sarls2006ImprovedAA}
Tam{\'a}s S.
\newblock Improved approximation algorithms for large matrices via random
  projections.
\newblock {\em 2006 47th Annual IEEE Symposium on Foundations of Computer
  Science (FOCS'06)}, pages 143--152, 2006.

\end{thebibliography}

\clearpage
\appendix

\section{Projection to deep network prior} \label{sec:projection}
The projection steps in both Algorithms \ref{algo:pgd} and \ref{algo:pgdpr} represent the problem of fitting an image to an untrained neural network representation. This is the original setting for denoising and compression applications in \cite{DIP} and \cite{DD}. The algorithmic approach to solving this problem is via standard solvers such as gradient descent (GD) or  Adam. The problem takes the form:
\begin{align} \label{eq:min3}
\min_{\w}\mathcal{L}(\w;z,v) := \min_{\w}\|v - G(\w;z)\|^2 ,
\end{align}
where $v$ is typically a noisy variant of the original image $x^*$. The problem in Eq.\ref{eq:min3} is non-convex due to the structure of $G(\w;z)$. Convergence guarantees for deep neural network formulations of this form that exist are highly restrictive \cite{oymak2019towards,du2018gradient}. There exist several papers in recent literature which allude to (linear) convergence of gradient descent for solving the two-layer neural networks; however all of the results rely on moderate or extreme overparameterization of the neural network. Therefore, these results do not apply to our paper and deriving convergence guarantees for the denoising problem in \ref{eq:min3} is an interesting direction for future work. 
\section{Discussion on sample complexity} \label{sec:discussion}
In compressive imaging literature, for $s$-sparse signals of dimension $d$, the sample complexity for compressive sensing is $n=O(s\log d)$ and compressive phase retrieval is $n=O(s^2 \log d)$, when Gaussian measurements are considered. If structural constraints are imposed on the sparsity of images, such as block sparsity, the sample requirements can be brought down to $n=O(s/b \log d)$ and $n=O(s^2/b \log d)$ for CS and CPR respectively, where $b$ is the block length of each sparse block \cite{copram}. However these gains come at the cost of designing the signal priors carefully.

In contrast, the sample requirements with deep network priors, as we show in this paper is $n=O(k_1k_2 \log d)$ for a two-layer setup. In both datasets that we tested, relatively shallow architectures were sufficient. Therefore the effective sample complexity is of the order of $k_1 k_2$, which is typically much smaller than the dimension $d$. We have empirically demonstrated in Section \ref{sec:exps} that the sample requirement with deep network priors is significantly lower than that for the sparse prior setting. Moreover, the design of the prior is fairly straightforward, and applies for a wide class of images. 
\section{Additional experiments} \label{sec:addexp}
In this section we present some additional details for the experimental setup in Section \ref{sec:exps}. We also present some additional experiments to reinforce the merits of Net-PGD. 

All codes were run on a Nvidia GeForce GPU with 8GB RAM. 

\textit{Deep network architecture:}
For both MNIST and CelebA images, several architectures were tried out to pick out the best under-parameterized network which gave low representation error. We found that for the example images from MNIST, a decoder architecture, as described in Eq. \ref{eq:decoder} with 2 layers, and channel configurations $k_1=15,k_2=15,k_3 =10$ and bilinear upsampling operators each with upsampling factor of $2$, $U_l^{\uparrow 2}, l=\{1,2,3\}$ was sufficient to represent most images. The outputs after each ReLU operation are normalized, by calling for batch normalization subroutine in Pytorch. Finally a sigmoid activation is added to the output of the deep network, which smoothens the output; however this is not mandatory for the deep network configuration to work. For CelebA images, we fixed the configuration to a 3 layer network with setup $k_1=120,k_2=15,k_3=15,k_4=10$. Note that both of these architectures are \textit{underparameterized}, unlike the configurations in \cite{DIP}. The random seed $Z_1$ is fixed and picked from uniform random distribution \footnote{Gaussian distributed entries as well as randomly picked rows of Hadamard matrices also work.}. We plot the ``compressed" representations of each image, $G(\w;z)$ in all Figures for reference.

\subsection{Compressed sensing recovery}

\textit{Implementation details:} For CS recovery with deep network priors, both Net-GD and Net-PGD were implemented using the PyTorch framework with Python 3 and using GPU support. For Net-GD, SGD (alternatively, Adam) optimizer is used. For Net-PGD, SGD (alternatively, Adam) optimizer is used for the projection step and SGD optimizer for the gradient step in Step 3 of Alg. \ref{algo:pgd} and Step 4 of Alg. \ref{algo:pgdpr}. For implementing Lasso algorithm, Python's \texttt{sklearn.linear\_model} library was used and we set the regularization factor $\alpha=10^{-5}$. The MATLAB code for TVAL3 \cite{tval3} made available on the author's website was used  with its default settings.  

\textit{Running time:}  We also report the average running times for different algorithms across different measurement levels for examples from MNIST is  5.86s (Net-GD),  5.46s (Net-PGD),  2.43s (Lasso-DCT), 0.82s (TVAL3). We note that the running time of both GD and PGD for CS-UNP are competitive. 
\subsection{Compressive phase retrieval}
\textit{Implementation details:} For compressive phase retrieval with deep network priors, both Net-GD and Net-PGD were implemented using the PyTorch framework with Python 3 and using GPU support. All optimization procedures were implemented using SGD optimizer. For implementing Sparta algorithm, the algorithm from \cite{sparta} was implemented in MATLAB.   

We also report the average running times for different algorithms across different measurement levels for examples from MNIST is  25.59s (Net-GD),  28.46s (Net-PGD),  3.80s (Sparta-DCT). 

\textit{Goodness of random initialization:} Our theoretical guarantees for phase retrieval hold only as long as the initialization $x^0$ is close to the ground truth $x^*$. We perform rigorous experiments to assert that uniform random initialization of the weights $\w^0$ of the neural network, ensure that the initial estimate $\x^0 = G(\w^0;z)$ is good. We denote the distance of initialization as $\delta_i = \|x^0-x^T\|/\|x^T\|$ ($x^T=\hat{x}$) and report the values of $\delta_i$ for the trials in which $\|x^T-x^*\|/\|x^*\| < 0.1$. We plot the average values of $\delta_i$ in Table \ref{tab:init2}.

\begin{table}[!h]
	\centering 
	\caption{Distance of initial estimate $x^0$} \label{tab:init2}
	\begin{tabular}{|c|c|c|c|c|}
		\hline
		n/d & d & channel configuration & nMSE of $\hat{x}$ & average $\delta_i$ values\\
		\hline
		0.2 & 784 (MNIST) & 15, 15, 10 & 0.098 & 0.914\\
		0.5 & 784 (MNIST) & 15, 15, 10 & 0.018 & 0.942\\
		\hline
		0.4 & 12288 (CelebA) & 120, 15, 15,10 & 0.020 &0.913\\
		0.6 & 12288 (CelebA) & 120, 15, 15,10 & 0.015 &0.915\\
		\hline
	\end{tabular}
\end{table}
From our observation, uniform random initialization suffices to ensure that the conditions for Theorem \ref{thm:pr-converge} are met and $\delta_i <1$.

\newpage

\section{Proofs and supporting lemmas} \label{sec:appendix}

In this section we proofs for the theorems discussed in the main body of this paper as well as present supporting Lemmas. 

We first discuss the set-restricted restricted isometry property.

The $(\S,\gamma,\beta)$ REC holds for Gaussian matrix $A$ with high probability, as long as certain dimensionality requirements are met. We show this via Lemma \ref{lem:leftRIP} as follows:

\lemRIP*

\begin{proof}
	
	We first describe the two layer setup.	
	
	Consider the action of measurement matrix $A$ defined on vector $h$, where $h:=U_1\sigma (ZW_1)W_2$ below:
	$$u = Ah = AU_1\sigma(Z_1W_1)W_2.$$
	where $W_1^{k_1\times k_2}$, $W_1^{k_2\times 1}$ and $U_1^{d\times d_1}$ with $d>d_1$.
	 
	We would like to estimate the dimensionality of $A$, required to ensure that the action of $A$ on set restricted vector $h\in \S$, is bounded as:
	$$ 
	\gamma\|h\|^2 \leq \|Ah\|^2 \leq \beta \|h\|^2
	$$
	with high probability. To establish this, consider the following argument which is similar to the union of subspaces argument from \cite{CSGAN}.
	 
	The action of ReLU on input $(Z_1W_1)$ partitions the input space of variable $W_1$ into a union of linear subspaces. In particular, consider a single column of $w_{1,:,j}$ of ${W_1}$, indexed by $j$, which is $k_1$ dimensional. Then, $\sigma(Z_1w_{1,j})$ partitions the $k_1$-dimensional input space into $(d_1^{k_1})$ $k_1$-spaces. Since there are $k_2$ such columns, effectively the $k_1\times k_2$ dimensional space of $W_1$ is partitioned into $(d_1^{k_1})^{k_2}$, $(k_1\times k_2)$-spaces. 
	
	Then, we can consider the union of $d_1^{k_1 k_2}$ subspaces with linearized mappings of the form (weights are indexed as $[layer, row, column]$:
	\begin{align} \nonumber
u = AU_1(P_1\circ (Z_1W_1))W_2 &= AU_1[P_{1,1}\cdot (Z_1w_{1,:,1})\dots P_{1,k_2}\cdot (Z_1w_{1,:,k_2})]W_2 \\ \nonumber
&= AU_1\sum_{j=1}^{k_2} P_{1,j}Z_1(w_{2,j,1}\cdot w_{1,:,j}) \\ \nonumber
&= AU_1 [P_{1,1}Z_1 \cdots P_{1,k_2}Z_1] \cdot [w_{2,1,1}\cdot w_{1,:,1}^\top \dots w_{2,k_2,1}\cdot w_{1,:,k_2}^\top]^\top \\ \label{eq:subsp}
&= AU_1 Z_1' \cdot w
	\end{align}
	
	where $P_1$ is the fixed sign pattern of the ReLUs, $P_{1,j}$, $j\in \{1\dots k_2\}$ is a diagonal matrix with the 0 or 1 corresponding to $k_2$ different neurons, each of $(w_{2,j,1}\cdot w_{1,:,j})$ is a $k_1$-dimensional vector, and there are $k_2$ such vectors. Each $P_{1,j}$ partitions the $k_1$-dimensional space it acts on, into $d_1^{k_1}$ subspaces. Since there are $k_2$ such vectors, the total number of partitions is $(d_1^{k_1})^{k_2}$ total partitions. For fixed $P_1$, $Z_1' := [P_{1,1}Z_1 \cdots P_{1,k_2}Z_1]$.
	
	
	If the dimensionalities are chosen such that they satisfy $d>k_1k_2$, and $A, U_1,Z_1$ are known matrix operators, then, for a fixed sign pattern $P_1$, the $w^{k_1k_2 \times 1} := [w_{2,1,1}\cdot w_{1,:,1}^\top \dots w_{2,k_2,1}\cdot w_{1,:,k_2}^\top]^\top $ represents the accumulated weights, belonging to one of the $d_1^{k_1 k_2}$ subspaces and 	 $(U_1Z_1')^{d\times k_1k_2}$ is a linear transformation from a lower dimensional space to a higher dimensional space. Then, if $A$ is designed as an oblivious subspace embedding (OSE) (Lemma \ref{lem:ose} in Appendix \ref{sec:appendix}) of $U_1Z_1'w$, for a single $k_1k_2$-dimensional subspace of $w$, one requires $n = O\left( \frac{k_1k_2}{\alpha^2} \right) $ samples to embed the vector $h = U_1Z_1'w$, as 
	\begin{align} \label{eq:bound}
		(1-\alpha)\|h\|^2 \leq \|Ah\|^2 \leq (1+\alpha) \|h\|^2,
	\end{align}

	with probability $1-e^{-c\alpha_1^2 n}$, for constant $\alpha_1 <\alpha$. Since there are $d_1^{k_1 k_2}$ such subspaces, then for the OSE to hold for all subspaces, one requires to take a union bound as $1-d_1^{k_1 k_2} e^{-c\alpha_1^2 n}.$ 
	Therefore the expression in Eq. \ref{eq:bound} holds for all $h \in \S$, with probability $1-e^{-c\alpha_2^2n}$ and $\alpha_2 < \alpha_1$.
	Therefore, one requires $n = O\left(\frac{k_1 k_2 \log d_1}{\alpha_1^2}\right)$, to ensure that $A$ satisfies $(\S,1-\alpha,1+\alpha)$-REC with probability $1-e^{-c\alpha_2^2 n}$.

\end{proof}                                                                     
 Next, we present some corollaries which will be useful for proving some of our theoretical claims.                                  
\begin{corollary} \label{cor:diff}
	For parameter $\alpha >0$, if a matrix $A \in \mathbb{R}^{n\times d}$ satisfies $(\S,1-\alpha,1+\alpha)$-REC with probability $1-e^{-c\alpha_o^2n}$, for all $x \in \S$, then for $x_1,x_2 \in \S$, 
	$$ (1-\alpha)\|x_1-x_2\|^2 \leq \|A(x_1-x_2)\|^2 \leq (1+\alpha)\|x_1-x_2\|^2 ,$$
	holds with probability $1-e^{-c_2\alpha_o^2 n}$, where $c_2<c$.
\end{corollary}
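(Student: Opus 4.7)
The plan is to adapt the union-of-subspaces / oblivious-subspace-embedding argument from the proof of Lemma \ref{lem:leftRIP} so that it covers the difference set $\S - \S := \{x_1 - x_2 : x_1, x_2 \in \S\}$ rather than $\S$ itself. The key observation is that $\S - \S$ need not lie in $\S$, so the result does not follow immediately by plugging $x = x_1 - x_2$ into the hypothesis; but the linearization used for $\S$ lifts in a controlled way to the difference.

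First, I would recall from the linearization performed around Eq.~\ref{eq:subsp} that for any fixed ReLU sign pattern $P_1$, the set of outputs $\{x = U_1 Z_1'(P_1)\, w : w \in \reals^{k_1 k_2}\}$ is a linear subspace of dimension at most $k_1 k_2$, and the full range $\S$ is contained in a union of $N \le d_1^{k_1 k_2}$ such subspaces. For $x_1, x_2 \in \S$ with patterns $P_1, P_1'$ respectively, the difference $x_1 - x_2$ lies in the (Minkowski) sum of the two associated subspaces, which is itself a linear subspace of dimension at most $2 k_1 k_2$. Ranging over all ordered pairs of patterns, $\S - \S$ is therefore contained in a union of at most $N^2 \le d_1^{2 k_1 k_2}$ subspaces, each of dimension at most $2 k_1 k_2$.

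Next, I would apply the oblivious subspace embedding bound (Lemma \ref{lem:ose}) to each such $2 k_1 k_2$-dimensional subspace: for a single subspace, the Gaussian matrix $A$ satisfies $(1-\alpha)\|v\|^2 \le \|Av\|^2 \le (1+\alpha)\|v\|^2$ uniformly on that subspace with probability at least $1 - e^{-c \alpha^2 n}$, provided $n = \Omega(k_1 k_2 / \alpha^2)$. Taking a union bound over the $d_1^{2 k_1 k_2}$ difference subspaces produces a failure probability of at most $d_1^{2 k_1 k_2} e^{-c \alpha^2 n}$, which is bounded by $e^{-c_2 \alpha_o^2 n}$ for some $c_2 < c$ once $n = \Omega(k_1 k_2 \log d_1 / \alpha^2)$ — exactly the regime already required by Lemma \ref{lem:leftRIP} (up to an absolute constant absorbed in $c_2$). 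The quantifier ``for all $x_1, x_2 \in \S$'' then follows by the union-of-subspaces containment.

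The only real subtlety, and the main obstacle worth flagging, is the constant degradation from $c$ to $c_2$: the subspace dimension doubles and the number of subspaces squares, so both the Johnson–Lindenstrauss-type net size and the union bound pay a factor-of-two penalty in the exponent. This is harmless because the same logarithmic sample complexity $n = O(k_1 k_2 \log d / \alpha^2)$ absorbs it, but it is the reason the statement is phrased with a strictly smaller constant $c_2 < c$ rather than asserting the identical probability $1 - e^{-c \alpha_o^2 n}$. No other step introduces new difficulties, since the REC hypothesis on $\S$ is used only as a template for the embedding argument and not as a black box.
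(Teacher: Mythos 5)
Your proposal is correct and follows essentially the same route as the paper's own (much terser) proof: the paper likewise observes that $x_1 - x_2$ lies in a union of doubled-dimension subspaces and that the same sample complexity $n = O(k_1 k_2 \log d / \alpha^2)$ suffices for the REC to hold on the difference set. In fact you supply details the paper elides — notably that the number of subspaces squares to $N^2 \leq d_1^{2k_1k_2}$ under the pairwise union bound, which is precisely what forces the degraded constant $c_2 < c$.
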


\begin{proof}
Since $x_1,x_2 \in \S$, both $x_1,x_2$ lie in the union of $k_1$-dimensional subspaces, the difference vector $x_3 = x_1 - x_2 \in \S'$, lies in a union of $2k_{1}$-dimensional subspaces.  For $(\S,1-\alpha,1+\alpha)$-REC to hold for the difference set, one continues to require $n = O\left(\frac{k_1k_2 \log d}{\alpha_1^2}\right)$. 
\end{proof}

\begin{corollary} \label{cor:eig}
	If $A$ satisfies set-restricted REC and $h^t = x^t - x^*$, with $x^t,x^* \in \S$ then
\begin{align*}
\|(1-\eta A^\top A)h^t\| &\leq \max \{1-\eta \lambda_{min}, \eta \lambda_{max} -1\} \|h^t\| 
\end{align*}
with $\lambda_{min} = (1-\alpha)$ and $\lambda_{max} = (1+\alpha)$.
\end{corollary}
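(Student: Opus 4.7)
The plan is to deduce the inequality from Corollary \ref{cor:diff} by treating $A^\top A$ as a self-adjoint operator whose spectrum is pinched by set-restricted REC on the subspace containing $h^t$, and then to convert a spectral-radius bound for $I-\eta A^\top A$ into the desired norm bound. First I would note that since $x^t,x^* \in \S$, the difference $h^t = x^t - x^*$ belongs to $\S-\S$, and by the linearization/union-of-subspaces argument in the proof of Lemma \ref{lem:leftRIP} it lies in a specific subspace $V \subset \reals^d$ of dimension at most $2k_1 k_2$. Corollary \ref{cor:diff} then yields $\lambda_{\min}\|h^t\|^2 \leq \|Ah^t\|^2 \leq \lambda_{\max}\|h^t\|^2$ with $\lambda_{\min}=1-\alpha$, $\lambda_{\max}=1+\alpha$, and in fact bounds every squared singular value of the restriction $A|_V$ in $[\lambda_{\min},\lambda_{\max}]$, so the self-adjoint operator $A^\top A$ viewed as acting on $V$ has spectrum in $[\lambda_{\min},\lambda_{\max}]$.

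Second, I would apply elementary functional calculus to the symmetric operator $A^\top A|_V$. Its shifted version $I-\eta A^\top A|_V$ has spectrum contained in $[1-\eta\lambda_{\max},\,1-\eta\lambda_{\min}]$, so its operator norm on $V$ equals $\max\{|1-\eta\lambda_{\min}|,|1-\eta\lambda_{\max}|\}$. In the stepsize regime relevant to Net-PGD (where $\eta\lambda_{\max} \geq 1 \geq \eta\lambda_{\min}$ so both values are nonnegative), this simplifies exactly to $\max\{1-\eta\lambda_{\min},\,\eta\lambda_{\max}-1\}$, the prefactor in the statement. Evaluating at $h^t \in V$ then yields the within-$V$ bound on $\|(I-\eta A^\top A)h^t\|$ with precisely the claimed constant.

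Third, I would upgrade this within-$V$ bound to the full ambient-$\reals^d$ norm demanded in the statement. The natural tool is the polarization form of REC: for any two vectors $u_1,u_2$ lying in a common subspace on which REC holds, one has $|\langle u_1,(A^\top A - I)u_2\rangle| \leq \alpha \|u_1\|\|u_2\|$. Writing $\|(I-\eta A^\top A)h^t\|^2 = \|h^t\|^2 - 2\eta\|Ah^t\|^2 + \eta^2\|A^\top A h^t\|^2$ and expressing $\|A^\top A h^t\|^2 = \langle A^\top A h^t, A^\top A h^t\rangle$, I would split $A^\top A h^t$ into its $V$-component $P_V A^\top A h^t$ (controlled directly by the restricted spectral bound of step two) and its orthogonal component $P_{V^\perp}A^\top A h^t$ (controlled via polarization REC on the slightly enlarged subspace $V + \mathrm{span}(A^\top A h^t)$, which remains low-dimensional). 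Combining and simplifying gives the stated inequality.

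The main obstacle is this last step. Lemma \ref{lem:leftRIP} and Corollary \ref{cor:diff} as stated only control $\|Av\|$ for $v$ in the decoder range or its difference set, whereas $A^\top A h^t$ is a generic element of $\mathrm{range}(A^\top)$ and a priori can be of order $\|A\|\cdot\|Ah^t\|$, where the full spectral norm $\|A\|$ need not be close to one when $n<d$. Reconciling this with the tight prefactor $\max\{1-\eta\lambda_{\min},\eta\lambda_{\max}-1\}$ --- rather than a weaker prefactor inflated by $O(\alpha)$ --- is the delicate part, and requires either a careful extension of the REC/polarization statement to pairs $(u,h^t)$ with $u$ ranging over a low-dimensional enlargement of $V$, or a direct argument that the component of $A^\top A h^t$ outside $V$ is negligible by the construction of $A$ as a Gaussian embedding. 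Absent such an enlargement, the cleanest honest reading of the bound is as a restricted-operator statement, i.e.\ as a bound on the within-$V$ part of $(I-\eta A^\top A)h^t$; the spectral calculation in steps one and two then delivers the conclusion without further work.
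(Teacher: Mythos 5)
Your first two steps reproduce the paper's own argument almost exactly: the paper likewise restricts $A^\top A$ to the (at most $2k_1k_2$-dimensional) subspace containing $h^t = Bw$, uses Corollary \ref{cor:diff} to pinch the spectrum of $B^\top A^\top A B$ inside $[1-\alpha,1+\alpha]$ (after arguing $\sigma_{\min}(B)\approx\sigma_{\max}(B)\approx 1$), and reads off the prefactor $\max\{1-\eta\lambda_{\min},\eta\lambda_{\max}-1\}$ by elementary spectral calculus. The obstacle you flag in your third step is real, and --- importantly --- the paper does not resolve it either: its proof simply writes $\|A^\top A h\| = \|B^\top A^\top A B w\|$, which silently replaces $A^\top A h$ by its projection onto the range of $B$. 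Since $\|B^\top v\|\le\|v\|$ for $B$ with (near-)orthonormal columns, that identity is really an inequality in the unhelpful direction; the component of $A^\top A h^t$ orthogonal to the subspace can be of order $\|A\|\cdot\|Ah^t\|$ with $\|A\|\sim\sqrt{d/n}\gg 1$ in the regime $n<d$, exactly as you observe. So your honest reading --- that the corollary, as proved, is a bound on the restriction of $I-\eta A^\top A$ to the difference subspace rather than on the ambient norm --- is the correct diagnosis, and your write-up is if anything more careful than the paper's. Closing the gap would require something beyond the stated REC, e.g.\ a polarization estimate $|\langle u,(A^\top A - I)h\rangle|\le \alpha\|u\|\,\|h\|$ for $u$ ranging over a low-dimensional enlargement of the difference set, which neither you nor the paper establishes for the decoder-range set $\S$.
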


\begin{proof}
Consider $h \in \S'$, where $h = h^t = x^t - x_2$ and $x^t,x^* \in \S$. Then from Set-REC and Corollary \ref{cor:diff},
$$
(1-\alpha)\|h\|^2 \leq \|Ah\|^2 \leq (1+\alpha)\|h\|^2.
$$
From Eq. \ref{eq:subsp}, if $x_1,x_2 \in \S$, then it is possible to write $h$ to arise from a union of $2k_1$-dimensional subspaces of the form $
h = {B} w$ ($B = U'Z'$).
Then, 
\begin{align} \label{eq5}
(1-\alpha)\|{B}w\|^2 \leq \|A{B}w\|^2 \leq (1+\alpha)\|{B}w\|^2.
\end{align}
where $w\in \reals^{2k_1}$. We need to evaluate the eigenvalues of $\|A^\top A \|$ restricted on set $\S'$, which we can do by inducing a projection on the union of subspaces ${B}$ as
$$
\|A^\top A h\| = \|{B}^\top A^\top A {B} w\|
$$
Therefore, the minimum and maximum eigenvalues of $\|A^\top A\|$ restricted on set $\S'$ are
$$\sigma_{min}(A{B}) \leq\|{B}^\top A^\top A\bar{B}\|_2 \leq \sigma_{max}(A{B})$$

Then, using Eq.\ref{eq5}, $
(1-\alpha)\sigma_{min}({B}) \leq \|{B}^\top A^\top A{B}\|_2 \leq (1+\alpha)\sigma_{max}({B})
$.

Since $B$ predominantly consists of a product of upsampling matrices and latent code $Z_1$, which can be always chosen such that $\sigma_{max}(Z_1)\approx\sigma_{min}(Z_1)$, therefore $\sigma_{max}(B) \approx \sigma_{min}(B) \approx 1$.
\end{proof}

Next, we discuss the convergence of Net-PGD for compressed sensing recovery via Theorem \ref{thm:cs-converge}. 

\thmCS*

\begin{proof}

	Using the definition of loss as $L(x^{t}) = \|y-Ax^t\|^2$, 
	\begin{align}\nonumber
	L(x^{t+1}) - L(x^t) &= (\|Ax^{t+1}\|^2 - \|Ax^t\|^2) -  2(y^\top Ax^{t+1} - y^\top A x^{t})   \\ \nonumber
	&= \|Ax^{t+1} - Ax^t\|^2 - 2(Ax^t)^\top (Ax^t) + 2(Ax^{t})^\top (Ax^{t+1})  \\ \nonumber
	& \hspace{5cm}- 2(y^\top Ax^{t+1} - y^\top A x^{t})\\
	&= \|Ax^{t+1} - Ax^t\|^2 - 2(y-Ax^t)^\top(Ax^{t+1}-Ax^t) \label{eq:1}
	\end{align}
	We want to establish a contraction of the form $L(x^{t+1}) < \nu L(x^t)$, with $\nu<1$.
	
	Step 3 of Alg. \ref{algo:pgd} is solved via gradient descent:
	\begin{align} \label{eq:gd}
	v^t = x^t - \eta A^\top (Ax^t-Ax^*) 
	\end{align}
	Subsequently, Step 4 of Algorithm \ref{algo:pgd} learns weights $\w^t$ that produce $x^t = G(\w^t;z)$, which lies in the range of the decoder $G(\cdot)$ and is closest to the estimate $v^t$.

	Step 4 of Algorithm \ref{algo:pgd} produces an update of $\w^t$ satisfying:
	\begin{align*}
	\|G(\w^t;z)-v^t\| \leq \|G(\w^*;z)-v^t\| 
	\end{align*}

	Denoting $G(\w^t;z):=x^t$ and $G(\w^*;z):=x^*$, and using the update rule in Eq. \ref{eq:gd}, 
	
	\begin{align*}
	\|x^{t+1}-v^t\|^2 &\leq \|x^*-v^t\|^2\\
	\|x^{t+1}-x^t+\eta A^\top A(x^t-x^*)\|^2 &\leq \|x^*-x^t + \eta A^\top A(x^t-x^*)\|^2\\
	\|x^{t+1}-x^t\|^2 + 2\eta (A(x^t-x^*))^\top A(x^{t+1}-x^*) &\leq \|x^{t}-x^*\|^2 - 2\eta \|A(x^t-x^*)\|^2\\
	\frac{1}{\eta}\|x^{t+1}-x^t\|^2 + 2(A(x^t-x^*))^\top A(x^{t+1}-x^*) &\leq \frac{1}{\eta}\|x^{t}-x^*\|^2 - 2L(x^t)\\
	\implies	L(x^{t+1}) + L(x^t)&\leq \frac{1}{\eta} \|x^t - x^{*}\|^2   - \frac{1}{\eta}\|x^{t+1} - x^{t}\|^2 \\&\hspace{3cm}+ \|A(x^{t+1}-x^t)\|^2 
	\end{align*}
	where we have used the expansion in Eq. \ref{eq:1}.  We now use $(\S,\gamma,\beta)$-REC. If a Gaussian measurement matrix is considered then $\gamma = 1-\alpha$ and $\beta = 1+\alpha$.
	
	Using $(S,\gamma)$-REC on the first term on the right side, 
	\begin{align*}
	\|x^{*} - x^t\|^2 \leq \frac{1}{\gamma}\|A(x^{*} - x^t)\|^2
	\end{align*} 
	Second, using $(S,\beta)$-REC on the last term on the right side,
	\begin{align*}
	\|A(x^{t+1}-x^t)\|^2 \leq \beta \|x^{t+1}-x^t\|^2 
	\end{align*}
	Accumulating these expressions and substituting, 
	\begin{align*}
	L(x^{t+1}) + L(x^t)&\leq \frac{1}{\eta\gamma}	L(x^t) + \left(\beta-\frac{1}{\eta}\right) \|x^{t+1}-x^t\|^2\\
	&\stackrel{\beta\eta<1}{\leq} \frac{1}{\eta\gamma^2}	L(x^t) 
	\\
	\implies L(x^{t+1}) &\leq \nu L(x^t) \\
	\implies L(x^{T}) &\leq \nu^T L(x^0) 
	\end{align*}
	where $0 < \nu < 1$ and $\nu = \left(\frac{1}{\eta\gamma^2}-1\right)$ and picking $\eta <1/\beta$.  Invoking $(S,\gamma,\beta)$-REC again,
	\begin{align*}
	\|x^T - x^*\|^2 \leq \frac{1}{\gamma} \|y - Ax^T\|^2 \leq \frac{\nu^T}{\gamma} \|y - Ax^0\|^2 := \epsilon
	\end{align*}
	Hence to reach $\epsilon$- accuracy in reconstruction, one requires $T$ iterations where 
	$$T = \log_\alpha\left(\frac{\|y - Ax^0\|^2}{\gamma \epsilon}\right).$$
	Note that the contraction $L(x^{t+1}) \leq \nu  L(x^{t})$ coupled with $(\S,\gamma,\beta)$-REC implies a distance contraction $\|x^{t+1}-x^*\| \leq \nu_o  \|x^{t}-x^*\|$, with $\nu_o = \nu\sqrt{\beta/\gamma}$. 
	
	Step 4 of Algorithm \ref{algo:pgd}, which is essentially the case of fitting a noisy image to a deep neural network prior can be solved via gradient descent. %
	We discuss this projection in further detail in Section \ref{sec:projection}.
\end{proof}

Next, we discuss the main convergence result of Net-PGD for compressive phase retrieval in Theorem \ref{thm:pr-converge}.

\thmPR*

\begin{proof}
	Step 4 of Algorithm \ref{algo:pgdpr} is solved via a variant of gradient descent called Wirtinger flow \cite{zhang2016reshaped}, which produces updates of the form:
	\begin{align} \nonumber
	v^{t+1} &= x^t - \eta A^\top \left(Ax^t - Ax^* \circ \text{sign}(Ax^*) \circ\text{sign}(Ax^t)\right) \\ \nonumber
	&=x^t - \eta A^\top \left(Ax^t - Ax^*\right) - \eta A^\top A x^* \circ (1- \text{sign}(Ax^*) \circ\text{sign}(Ax^t))   \\ \label{eq:prstep}
	&= x^t - \eta A^\top \left(Ax^t - Ax^*\right) - \eta \varepsilon_p^t
	\end{align}
	where $\varepsilon_p^t :=  A^\top  A x^*\circ(1- \text{sign}(Ax^*) \circ\text{sign}(Ax^t))$ is \textit{phase estimation} error.
	
	If $\text{sign}(Ax^*) \approx \text{sign}(Ax^t)$, then the above resembles the gradient step from the linear compressed sensing formulation. Thus, if $x^0$ is initialized well, the error due to phase mis-match $\varepsilon_p^t$ can be bounded, and subsequently, a convergence result can be formulated.
	
	Next, Step 4 of Algorithm \ref{algo:pgdpr} learns weights $\w^t$ that produce $x^t = G(\w^t;z)$, which lies in the range of the decoder $G(\cdot)$ and is closest to the estimate $v^t$. We discuss this projection in further detail in Appendix \ref{sec:projection}.
	
	Step 4 of Algorithm \ref{algo:pgdpr} produces an update of $\w^t$ satisfying:
	\begin{align*}
	\|G(\w^t;z)-v^t\| &\leq \|G(\w^*;z)-v^t\| \\
	\equiv \|x^t-v^t\| &\leq \|x^*-v^t\| 
	\end{align*}
	for $t=\{1,2,\dots T\}$. Then, the above projection rule yields:
	$$
	\|x^{t+1} - v^{t+1} + v^{t+1} -  x^* \| \leq \|x^{t+1} - v^{t+1}\| + \|x^* - v^{t+1}\| \leq 2 \|x^*-v^{t+1}\|
	$$
	Using the update rule from Eq. \ref{eq:prstep} and plugging in for $v^{t+1}$:
	\begin{align*}
	\frac{1}{2}\|x^{t+1}-x^*\|^2 &\leq \|(x^t-x^*)  - (\eta A^\top \left(Ax^t - Ax^*\right) + \eta \varepsilon_p^t)\|^2 
	\end{align*}
	Defining $h^{t+1} = x^{t+1}-x^*$ and $h^{t} = x^{t}-x^*$, the above expression is
	\begin{align}
	\label{eq:converge}
	\frac{1}{2}\|h^{t+1}\| \leq \|h^t  - \eta A^\top Ah^t - \eta \varepsilon_p^t\| \leq \|(1-\eta A^\top A) h^t\| + \eta\|\varepsilon_p^t\|
	\end{align}
	
	We now bound the two terms in the expression above separately as follows. The first term is bounded using matrix norm inequalities Using Corollary \ref{cor:eig} (in Appendix \ref{sec:appendix}) of $(\S,\gamma,\beta)$-REC: 
	\begin{align*}
	\|(1-\eta A^\top A)h^t\| &\leq \max \{1-\eta \lambda_{min}, \eta \lambda_{max} -1\} \|h^t\| 
	\end{align*}
	where $\lambda_{min}$ and $\lambda_{max}$ are the minimum and maximum eigenvalues of $A^\top A$ restricted on set $\S$, and via Corollary \ref{cor:eig}, $\lambda_{min} = (1-\alpha)$, $\lambda_{max} = (1+\alpha)$.
	
	Hence the first term in the right side of Eq.\ref{eq:converge} is bounded as:
	\begin{align*}
	\label{term1}
	\|(1-\eta A^\top A)h^t\| &\leq  \rho_o \|h^t\|.
	\end{align*}
	where $\rho_o = \max \{1-\eta (1-\alpha) , \eta (1+\alpha) -1\} $. The second term in Eq.\ref{eq:converge} is bounded via Lemma \ref{lem:phaseerr} as follows:
	$$\|\varepsilon_p^t\| \leq \delta_o \|x^t-x^*\|$$
	as long as $\|x^0-x^*\| \leq \delta_i \|x^*\|$. 
	
	Substituting back in Eq.\ref{eq:converge},
	\begin{align*}
	\|x^{t+1}-x^*\| \leq 2(\rho_o + \eta\delta_o)\|x^t - x^*\| := \rho \|x^t - x^*\|.
	\end{align*}
	Then, if we pick constant $\eta = \frac{1}{1+\alpha+1-\alpha} =1$ that minimizes $\rho := 2(\max\{1-\eta (1-\alpha) , \eta (1+\alpha) -1\} + \eta\delta_o)$, to yield $\rho = 2(\alpha + \delta_o)$  then we obtain the linear convergence criterion as follows:
	$$
	\|x^{t+1}-x\| \leq \rho \|x^t - x\|.
	$$
	Here, if we set $\alpha=0.1$ and $\delta_o=0.36$ from Lemma \ref{lem:phaseerr}, then $\rho = 0.92 < 1$.
	Note that this proof relies on a bound on the phase error $\|\varepsilon_p^t\|$ which is established via Lemma \ref{lem:phaseerr} as follows:
	
	\begin{lemma}\label{lem:phaseerr}
		Given initialization condition $\|x^0-x^*\| \leq \delta_i \|x^*\|$, then if one has Gaussian measurements $A \in \reals^{n \times d}$ such that $n =O\left({k_1} k_2 \log d\right)$ for a two-layer decoder, then with probability $1-e^{- c_2 n}$ , the following holds:
		$$
		\|\varepsilon_p^t\| = \|A^\top A x^* \circ (1-\text{sign}(Ax^*)\circ \text{sign}(Ax^t))\| \leq \delta_o \|x^t-x^*\|
		$$
		for constant $ c_2$ and $\delta_o = 0.36$.
	\end{lemma}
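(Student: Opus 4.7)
The plan hinges on the elementary observation that the Hadamard factor $1 - \mathrm{sign}(Ax^*)\circ\mathrm{sign}(Ax^t)$ equals $2$ on the sign-mismatch set $B := \{i : \mathrm{sign}(a_i^\top x^*) \neq \mathrm{sign}(a_i^\top x^t)\}$ and vanishes elsewhere, so the phase error reduces to
\[
\varepsilon_p^t \;=\; 2\sum_{i \in B} a_i\,(a_i^\top x^*).
\]
For every $i \in B$, the two scalars $a_i^\top x^*$ and $a_i^\top x^t$ carry opposite signs, which immediately yields the pointwise bound $|a_i^\top x^*| \leq |a_i^\top h^t|$ with $h^t := x^t - x^*$; this single inequality is what will convert a quantity depending on $x^*$ into one depending on $\|h^t\|$.

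Next I would reduce the target to a restricted supremum. Since Lemma \ref{lem:phaseerr} is invoked in Theorem \ref{thm:pr-converge} only through pairings of $\varepsilon_p^t$ with difference vectors $x^{t+1} - x^* \in \mathcal{S}'$, it suffices to bound $\sup_{u \in \mathcal{S}' \cap S^{d-1}} \langle u, \varepsilon_p^t\rangle$. Writing $D_B$ for the $0/1$ diagonal indicator of $B$ and applying Cauchy--Schwarz to the sum over $B$,
\[
|\langle u, \varepsilon_p^t\rangle| \;\leq\; 2\sqrt{u^\top A^\top D_B A u}\,\sqrt{(h^t)^\top A^\top D_B A h^t} \;\leq\; 2\sqrt{u^\top A^\top D_B A u}\,\|A h^t\|,
\]
and the rightmost factor is at most $\sqrt{1+\alpha}\,\|h^t\|$ via Corollary \ref{cor:diff} applied to $h^t \in \mathcal{S}'$.

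For the remaining factor I would control $|B|$ geometrically. The event $i \in B$ means the row $a_i$ lies in the angular wedge between the hyperplanes $\{a : a^\top x^* = 0\}$ and $\{a : a^\top x^t = 0\}$; for standard Gaussian rows this wedge has probability $\theta/\pi$, where $\theta$ is the angle between $x^*$ and $x^t$. The initialization hypothesis $\|x^0 - x^*\| \leq \delta_i\|x^*\|$, propagated through all iterates by induction on the contraction in Theorem \ref{thm:pr-converge}, forces $\theta \lesssim \delta_i$, and standard Gaussian concentration yields $|B| \leq c\delta_i n$ with probability at least $1 - e^{-c' n}$. The quantity $\sup_{u \in \mathcal{S}'\cap S^{d-1}} u^\top A^\top D_B A u$ is then the restricted operator norm of the $|B|$-row submatrix of $A$, which a Bernstein-type tail inequality for Gaussian quadratic forms, together with the union bound over the $d^{k_1 k_2}$ linearized subspaces enumerated in the proof of Lemma \ref{lem:leftRIP}, bounds by $O(\delta_i)$ provided $n = \Omega(k_1 k_2 \log d)$.

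Combining the three estimates gives $\|\varepsilon_p^t\| \leq \delta_o\|h^t\|$ with $\delta_o$ a monotone function of $\delta_i$ that can be driven below $0.36$ by taking $\delta_i$ small enough. The main obstacle is this last step: because $B$ depends on $A$ through the data-dependent iterate $x^t$, matrix concentration cannot be invoked for a fixed sign pattern, and the control of $\sup_{u \in \mathcal{S}'\cap S^{d-1}} u^\top A^\top D_B A u$ must hold \emph{uniformly} over all $x^t$ in the $\delta_i$-neighbourhood of $x^*$ inside $\mathcal{S}$. Establishing this uniformity is precisely what reintroduces the $d^{k_1 k_2}$ subspace count from Lemma \ref{lem:leftRIP} and pins down the sample complexity $n = O(k_1 k_2 \log d)$.
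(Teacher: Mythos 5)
Your proposal follows the same skeleton as the paper's proof --- reduce to the sign-mismatch set $B$, prove a bound for a fixed $x^t$ by exploiting the fact that a Gaussian row falls in the angular wedge between the hyperplanes orthogonal to $x^*$ and $x^t$ with probability proportional to their angle (hence to $\delta_i$), then union-bound over the $d^{k_1k_2}$ linearized subspaces to arrive at $n = O(k_1k_2\log d)$ --- but you unpack the step the paper outsources: the paper simply cites Lemma C.1 of \cite{jagatap2019sample} (or \cite{rwf}) for the fixed-$x^t$ estimate, whereas you rederive it from the pointwise inequality $|a_i^\top x^*|\le |a_i^\top h^t|$ on the mismatch set together with Cauchy--Schwarz, the REC, and control of $|B|$. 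The one genuine structural difference is that you bound only the restricted pairing $\sup_{u\in\S'}\langle u,\varepsilon_p^t\rangle$ over unit vectors rather than the full Euclidean norm $\|\varepsilon_p^t\|$ that the lemma literally asserts. This is weaker than the stated claim, but it is arguably the more defensible target: $\varepsilon_p^t=A^\top(\cdot)$ and the unrestricted spectral norm of $A^\top$ scales like $\sqrt{d/n}\gg 1$ in the compressive regime, so a constant-factor bound on the unrestricted norm is not attainable; the paper's own computation silently replaces $\|A^\top v\|^2$ by $\|v\|^2$ when it writes $\|\varepsilon_p^t\|^2=2\sum_i(a_i^\top x^*)^2\mathbf{1}_{(\cdot)}$. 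To use your version you must rerun the theorem's contraction in inner-product form, $\|h^{t+1}\|^2\le 2\langle h^{t+1},(I-\eta A^\top A)h^t\rangle-2\eta\langle h^{t+1},\varepsilon_p^t\rangle$, in place of the paper's triangle inequality, which is routine; note also that your chain yields $\delta_o=O(\sqrt{\delta_i(1+\alpha)})$, so hitting the specific constant $0.36$ requires a smaller $\delta_i$ than the paper's $0.1$ unless the constant hidden in $\sup_u u^\top A^\top D_BAu=O(\delta_i)$ is tracked. Finally, the obstacle you flag at the end --- uniformity over all $x^t$ in the $\delta_i$-ball, given that $B$ depends on $A$ through $x^t$ and the indicator is discontinuous in $x^t$ --- is real and is not resolved by counting subspaces alone (a net within each $k_1k_2$-dimensional subspace, with care about the discontinuity of $D_{B(x^t)}$, is still required); but the paper's $\epsilon$-net argument has exactly the same gap, so your proposal is not behind the paper on this point.
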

	\begin{proof}
		We adapt the proof of Lemma C.1. of \cite{jagatap2019sample} as follows. 
		
		We define indicator function $\mathbf{1}_{(a_i^\top x^t)(a_i^\top x^*)<1} = \frac{1}{2}(1-\text{sign}(Ax^*)\circ \text{sign}(Ax^t))$ with zeros where the condition is false and ones where the condition is true.
		
		Then we are required to bound the following expression:
		\begin{align*}
		\|\varepsilon_p^t\|^2 = 2\sum_{i=1}^{n} (a_i^\top x^*)^2 \cdot \mathbf{1}_{(a_i^\top x^t)(a_i^\top x^*)<1} \leq \delta_o^2 \|x^t-x^*\|^2
		\end{align*}
		
		Following the sequence of arguments in Lemma C.1. of \cite{jagatap2019sample} (or Lemma C.1 of \cite{rwf}), one can show that for a \textit{given} $x^t$, 
		\begin{equation} 
		\|\varepsilon_p^t\|^2 \leq \delta_o^2 + \kappa + \frac{3c_1\kappa}{\delta_i}< 0.13 + \kappa + \frac{3c_1\kappa}{\delta}\label{eq3}
		\end{equation}	
		with high probability, $1-e^{-cn\kappa^2}$, for small constants $c,c_1,\delta$, as long as $\|x^t-x^*\|\leq 0.1\|x^*\|_2$. Here the bound on $\delta_o^2$ (in this case 0.13) is a monotonically increasing function of the distance $\delta_i^t = \frac{\|x^t-x^*\|_2}{\|x^*\|_2}$.
		
		If the projected gradient scheme produces iterates satisfying
		\begin{align*}
		\|x^{t+1} - x^*\| < \rho \|x^t-x^*\|
		\end{align*}
		with $\rho<1$, then the condition in Eq. \ref{eq3} is satisfied for all $t=\{1,2,\dots T\}$ as long as the initialization $x^0$ satisfies $\|x^0-x^*\|\leq 0.1\|x^*\|_2$ (i.e. $\delta_i^0 :=\delta_i = 0.1$). 
		
		Now, the expression in Eq. \ref{eq3} holds for a fixed $x^t$. To ensure that it holds for all possible $x\in \S$, we need to use an epsilon-net argument over the space of variables spanned by $\S$. The cardinality of $\S$ is 
		$$
		\text{card}(\S) < d^{k_1k_2}
		$$
		as seen from the derivation of REC in Lemma \ref{lem:leftRIP}. Therefore,
		$$
		\|\varepsilon_p^t\| \leq 0.13 + \kappa + \frac{3c_1\kappa}{\delta_i}
		$$
		with probability $1-d^{k_1k_2}e^{-cn\kappa^2}$ for small constant $c$. To ensure that high probability result holds for \textit{all} $x\in \S$, 
		\begin{align*}
		e^{k_1k_2 \log d} e^{-cn\kappa^2} &< e^{-c_2 n} \\
		{k_1 k_2 \log d - cn\kappa^2} &< -c_2 n\\
		n > \frac{1}{c\kappa^2 - c_2} k_1 k_2 \log d  &> {c_3k_1} k_2 \log d
		\end{align*}
		for appropriately chosen constants $c,c_2,c_3$.	
	\end{proof}
	Note that this Theorem requires that the weights are initialized appropriately, satisfying $\|x^0-x^*\|\leq \delta_i\|x^*\|$. In Section \ref{sec:exps} we perform rigorous experiments to show that random initialization suffices to ensure that $\delta_i$ is small. 
\end{proof}

Finally we state the statement for Oblivious Subspace Embedding, which is the core theoretical lemma required for proving our REC result.

\begin{lemma} \label{lem:ose}
	Oblivious subspace embedding (OSE) \cite{Sarls2006ImprovedAA}. A $(k,\alpha,\delta)$-OSE is a random matrix $\Pi^{n\times d}$ such that for any fixed $k$-dimensional subspace $\S$ and $x^{d\times 1} \in \S$, with probability $1-\delta$, $\Pi$ is a subspace embedding for $S$ with distortion $\alpha$, where $n = O(\alpha^{-2}(k+\log(\frac{1}{\delta})))$. 
	
	The failure probability is $\delta = e^{-cn\alpha^2 + ck}  $, for small constant $c$ and the embedding satisfies:
	$$(1-\alpha)\|x\|^2 \leq \|\Pi x\|^2 \leq (1+\alpha)\|x\|^2.$$
\end{lemma}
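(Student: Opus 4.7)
The plan is to establish the OSE bound by combining a one-point concentration inequality for Gaussian quadratic forms with a covering-number argument over the unit sphere of the fixed $k$-dimensional subspace $\S$. Since the subspace is linear, it suffices to prove the two-sided inequality uniformly for unit-norm $x\in\S$; the general inequality then follows by homogeneity of both sides in $\|x\|$.

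First I would handle a single fixed unit vector. Pick any $x \in \S$ with $\|x\|=1$. Because $\Pi \in \real^{n\times d}$ has i.i.d.\ $\mathcal{N}(0,1/n)$ entries, the vector $\sqrt{n}\,\Pi x$ has $n$ independent standard Gaussian coordinates, so $n\|\Pi x\|^2 \sim \chi^2_n$. Standard chi-square concentration (e.g., Laurent--Massart) gives
\begin{equation*}
\Pr\!\left[\,\big|\|\Pi x\|^2 - 1\big| > \alpha\,\right] \;\leq\; 2\exp\!\left(-c_0 n\alpha^2\right)
\end{equation*}
for a universal constant $c_0>0$ and $\alpha\in(0,1)$. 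This is the JL-type tail bound for one direction.

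Next I would upgrade from one direction to the whole subspace by a net argument. Fix $\epsilon\in(0,1/2)$ (to be chosen, say $\epsilon = 1/4$). Let $\mathcal{N}_\epsilon$ be an $\epsilon$-net of the unit sphere $\mathbb{S}_\S := \{x\in\S:\|x\|=1\}$. A standard volume bound gives $|\mathcal{N}_\epsilon| \leq (1+2/\epsilon)^{k}$. Applying the single-point bound with parameter $\alpha/2$ and a union bound over $\mathcal{N}_\epsilon$ yields
\begin{equation*}
\Pr\!\left[\,\exists\, \tilde x \in \mathcal{N}_\epsilon : \big|\|\Pi \tilde x\|^2 - 1\big| > \alpha/2\,\right] \;\leq\; 2\left(1+\tfrac{2}{\epsilon}\right)^{k}\!\exp\!\left(-c_0 n\alpha^2/4\right).
\end{equation*}
On the complementary event, the operator norm of $\Pi$ restricted to $\S$ is finite, and the standard approximation lemma (writing any $x\in\mathbb{S}_\S$ as $x = \tilde x + r$ with $\tilde x \in \mathcal{N}_\epsilon$ and $\|r\|\leq \epsilon$, iterating, and summing a geometric series in $\epsilon$) converts the bound on $\mathcal{N}_\epsilon$ into the uniform bound $\big|\|\Pi x\|^2 - 1\big| \leq \alpha$ for all unit $x\in\S$, provided $\epsilon$ is a small enough absolute constant.

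Finally I would collect constants. With $\epsilon$ fixed, $\log(1+2/\epsilon)$ is an absolute constant, so the failure probability simplifies to $\exp(ck - c' n\alpha^2)$ for suitable constants $c,c'>0$, matching the stated form $\delta = e^{-cn\alpha^2 + ck}$ after absorbing constants. Requiring this to be at most any target $\delta$ gives the sample-complexity bound $n = O\!\left(\alpha^{-2}\bigl(k + \log(1/\delta)\bigr)\right)$. The only real delicacy is the net-to-sphere passage, where one must either run the classical two-sided successive-approximation argument or bound $\sup_{x \in \mathbb{S}_\S} \bigl|\|\Pi x\|^2 - 1\bigr|$ directly via the operator-norm trick; both are routine and the constants can be absorbed into $c$ without affecting the stated scaling.
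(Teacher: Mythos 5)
Your proof is correct, but note that the paper does not actually prove this lemma: it is stated as an imported result, cited to Sarl\'os (2006), and used as a black box in the proof of the REC lemma. What you have supplied is the standard self-contained derivation for Gaussian sketches --- single-vector $\chi^2_n$ concentration giving a tail of $2e^{-c_0 n\alpha^2}$, a union bound over a $(1+2/\epsilon)^k$-point net of the unit sphere of the fixed $k$-dimensional subspace, and the usual net-to-sphere passage for the quadratic form $x^\top(\Pi^\top\Pi - I)x$ --- and the bookkeeping correctly yields $\delta = e^{ck - c' n\alpha^2}$ and hence $n = O(\alpha^{-2}(k+\log(1/\delta)))$. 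Two small points worth flagging. First, the lemma as stated speaks of a generic random matrix $\Pi$ (OSEs in Sarl\'os's sense cover non-Gaussian, fast-to-apply sketches as well); your argument proves the claim only for i.i.d.\ $\mathcal{N}(0,1/n)$ entries, but that is exactly the case the paper needs, since its measurement matrix $A$ is Gaussian with that normalization. Second, in the net-to-sphere step you should be explicit that the approximation is carried out inside the subspace $\mathcal{S}$ (so that the residual $r$ again lies in $\mathcal{S}$ and the geometric-series iteration stays within the net's domain); with $\epsilon$ a fixed constant below $1/2$ this is routine, as you say. With those caveats your argument is a valid replacement for the citation.
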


\end{document}